\newtheorem{theorem}{Theorem}[section]
\newtheorem{lemma}[theorem]{Lemma}
\newtheorem{corollary}[theorem]{Corollary}
\newtheorem{proposition}[theorem]{Proposition}
\newtheorem{definition}[theorem]{Definition}
\newtheorem{assumption}[theorem]{Assumption}
\newcommand{\argmin}{{\rm argmin }}
\newcommand{\inner}[1]{\langle #1 \rangle}
\newcommand{\OneNorm}[1]{\| #1 \|_1}
\newcommand{\TwoNorm}[1]{\| #1 \|_2}
\newcommand{\InfNorm}[1]{\| #1 \|_\infty}
\newcommand{\FishNorm}[1]{\| #1 \|_\sFisher}
\newcommand{\R}{{\mathbb{R}}}
\newcommand{\Expct}{{\mathbb E}}
\newcommand{\Loss}{{\cal L}}
\newcommand{\ELoss}{{\widehat{\cal L}}}
\newcommand{\sFisher}{{\mathcal{F}^\star}}
\newcommand{\stheta}{{\theta^\star}}
\newcommand{\htheta}{\hat\theta}
\newcommand{\moment}{{m}}
\newcommand{\cumulant}{{c}}
\newcommand{\staty}{{t}}
\newcommand{\smoment}{{\alpha}}
\newcommand{\salpha}{{\alpha^\star}}
\newcommand{\eigmin}{{\kappa^\star_{\mathrm{min}}}}
\newcommand{\eigmax}{{\kappa^\star_{\mathrm{max}}}}
\newcommand{\Sup}{{S}}
\newcommand{\SupC}{{S^C}}
\newcommand{\SupSize}{{s}}
\newcommand{\dist}{{\rho}}
\newcommand{\thresh}{\tau}
\newcommand{\Clip}{\thresh}
\newcommand{\cliptheta}{\hat{\theta}^{\thresh}}
\newcommand{\reftheta}{{\tilde\theta}}
\title{
Learning Exponential Families in High-Dimensions:
\vspace*{0.05in} \\ 
\Large{Strong Convexity and Sparsity}
}
\author{ Sham M. Kakade
}
\author{
Sham M. Kakade \\
Department of Statistics\\
The Wharton School\\
University of Pennsylvania, USA
\And
Ohad Shamir\\
School of Computer Science and Engineering\\
The Hebrew University of Jerusalem, Israel
\And
Karthik Sridharan\\
Toyota Technological Institute\\
Chicago, USA
\And
Ambuj Tewari\\
Toyota Technological Institute\\
Chicago, USA
}
\begin{document}
\maketitle
\vspace*{.75cm}
\begin{abstract}
The versatility of exponential families, along with their attendant
convexity properties, make them a popular and effective
statistical model. A central issue is learning these models in
high-dimensions, such as when there is some sparsity pattern of the
optimal parameter. This work characterizes a certain strong convexity property of \emph{general} exponential families,
which allow their generalization
ability to be quantified. In particular, we show how this property can be
used to analyze generic exponential families under $L_1$
regularization.
\end{abstract}

\section{Introduction}

Exponential models are perhaps the most versatile and pragmatic
statistical model for a variety of reasons --- modelling flexibility
(encompassing discrete variables, continuous variables, covariance
matrices, time series, graphical models, etc); convexity properties allowing ease of
optimization; and robust generalization ability.
A principal issue for applicability to large
scale problems is estimating these models when the ambient dimension
of the parameters, $p$, is much larger than the sample size $n$ ---
the ``$p\gg n$'' regime.

Much recent work has focused on this problem in the special case of
linear regression in high dimensions, where it is assumed that the
optimal parameter vector is sparse
(e.g. \cite{ZhaoYu06,candes-2007-35,MeinshausenYu,bickel-2008}). This
body of prior work focused on: sharply characterizing the convergence
rates for the prediction loss; consistent model selection; and
obtaining sparse models. As we tackle more challenging problems, there
is a growing need for model selection in more general exponential
families. Recent work here includes learning Gaussian graphs
(\cite{Ravikumar:1141507}) and Ising models (\cite{Ravikumar:Ising}).

Classical results established that consistent estimation in
\emph{general} exponential families is possible, in the asymptotic
limit where the number of dimensions is held constant (though some
work establishes rates under certain conditions as $p$ is allowed to
grow slowly with $n$ \citep{portnoy,ghosal}). However, in modern
problems, we typically grow $p$ rapidly with $n$ (so even asymptotically we
are often interested in the regime where $p\gg n$, as in the case of
sparse estimation). While we have a handle on this question for a
variety of special cases, a pressing
question here is understanding how fast $p$ can scale as a
function of $n$ in \emph{general} exponential families --- such an
analysis must quantify the relevant aspects of the particular family
at hand which govern their convergence rate. This is the focus of this
work. We should emphasize that throughout this paper, while we are
interested in \emph{modelling} with an exponential family, we are
agnostic about the true underlying distribution (e.g we do not
necessarily assume that the data generating process is from an
exponential family).

\paragraph{Our Contributions and Related Work}

The key issue in analyzing the convergence rates of
exponential families in terms of their prediction loss (which we take to
be the log loss) is in characterizing the nature in which they are
strictly convex --- roughly speaking, in the asymptotic regime where
we have a large sample size $n$ (with $p$ kept fixed), we have
a central limit theorem effect where the log loss of any
exponential family approaches the log loss of a Gaussian, with a
covariance matrix corresponding to the Fisher information matrix. Our
first main contribution is quantifying the rate at which this 
effect occurs in general exponential families.

In particular, we show that every exponential family satisfies a
certain rather natural growth rate condition on their standardized
moments and standardized cumulants (recall that the $k$-th
standardized moment is the \emph{unitless} ratio of the $k$-th central
moment to the $k$-th power of the standard deviation, which for
$k=3,4$ is the skew and kurtosis). This condition is rather mild,
where these moments can grow as fast as $k!$. Interestingly, similar
conditions have been well studied for obtaining exponential tail
bounds for the convergence of a random variable to its
mean~\citep{Bernstein46}. We show that this
growth rate characterizes the rate at which the prediction loss of the
exponential family behaves as a strongly convex loss function. In
particular, our analysis draws many parallels to that of the analysis
of Newton's method, where there is a ``burn in'' phase in which a
number of iterations must occur until the function
behaves as a locally quadratic function --- in our statistical
setting, we now require a (quantified) ``burn in'' sample size, where
beyond this threshold sample size, the prediction loss inherits the
desired strong convexity properties (i.e. it is locally quadratic).

Our second contribution is an analysis of $L_1$ regularization in
generic families, in terms of both prediction loss and the sparsity
level of the selected model. Under a particular sparse eigenvalue
condition on the design matrix (the Restricted Eigenvalue (RE)
condition in \cite{bickel-2008}), we show how $L_1$ regularization in
general exponential families enjoys a convergence rate of $O(\frac{s
  \log p}{n})$ (where $s$ is the number of relevant features). This RE
condition is one of the least stringent conditions which permit 
this optimal convergence rate for linear regression case (see
\cite{bickel-2008}) --- stronger mutual incoherence/irrepresentable
conditions considered in \cite{ZhaoYu06} also provide this rate. We
show that an essentially identical convergence rate can be achieved
for \emph{general} exponential families --- our results are non-asymptotic and
precisely relate $n$ and $p$.

Our final contribution is one of \emph{approximate} sparse model
selection, i.e. where our goal is to obtain a sparse model with low
prediction loss. A drawback of the RE condition in comparison to the
mutual incoherence condition is that the latter permits perfect
recovery of the true features (at the price of a more stringent
condition). However, for the case of the linear regression,
\cite{ZhaoYu06,bickel-2008} show that, under a sparse
eigenvalue or RE condition, the
$L_1$ solution is actually sparse itself (with a
multiplicative increase in the sparsity level, that depends on a certain condition number of the design matrix) -- so while the
the $L_1$ solution may not precisely recover the true model, it still
is sparse (with some multiplicative increase) and does recover those
features with large true weights.

For general exponential families, while we do not have a
characterization of the sparsity level of the $L_1$-regularized
solution (an interesting open question), we do however provide a
simple two stage procedure (thresholding and refitting)
which provides a sparse model, with support on no more than merely $2s$
features and which has nearly as good performance (with a rather
mild increase in the risk) --- this result is novel even for the
square loss case. Hence, even under the rather
mild RE condition, we can obtain both a favorable convergence rate and a 
 sparse model for generic families.

\section{The Setting}

Our samples $\staty \in \R^p$ are distributed independently according
to $D$, and we model the process with $P(\staty|\theta)$, where $\theta\in\Theta$. However, we do
not necessarily assume that $D$ lies in this model class. The class of
interest is \emph{exponential families}, which, in their natural form,
we denote by:
\[
P(\staty|\theta) = h_\staty \exp\{\inner{\theta, \staty} - \log Z(\theta)\}
\]
where $\staty$ is the natural sufficient statistic for $\theta$, and
$Z(\theta)$ is the partition function. Here, $\Theta$ is the
natural parameter space --- the (convex) set where $Z(\cdot)$ is finite.
While we work with an exponential family in this general
(though natural) form, it should be kept in mind that $t$ can
be the sufficient statistic for some prediction variable $y$ of
interest, or, for a generalized linear model (such as for logistic or linear regression),
we can have $t$ be a function of both $y$ and some covariate $x$ (see
\cite{Dobson90}). We return to this point later.


Our prediction loss is the likelihood function and
$\theta^*$ is the optimal parameter, i.e.
\[
\Loss(\theta) = \Expct_{\staty \sim D}[-\log P(\staty|\theta)],
\quad \quad
\stheta = \argmin \ \Loss(\theta) \, .
\]
where the $\argmin$ is over the natural parameter space and it is
assumed that this $\theta^*$ is an interior point of this space. Later
we consider the case where $\stheta$ is sparse. 

We denote the Fisher information of $P(\cdot|\stheta)$ as 
$
\sFisher = \Expct_{\staty\sim P(\cdot|\stheta)}\left[ - \nabla^2 \log P(\staty|\stheta) \right]
$, under the model of
$\stheta$. The induced ``Fisher risk''  is 
\[
\FishNorm{\theta-\stheta}^2 := (\theta-\stheta)^\top\sFisher (\theta-\stheta) \ .
\]
We also consider the $L_1$ risk $\OneNorm{\theta-\stheta}$. 

For a sufficiently large sample size, we expect that the Fisher risk
of an empirical minimizer $\htheta$, $\FishNorm{\htheta-\stheta}^2$,
be close to $\Loss(\htheta)-\Loss(\stheta)$ --- one of our main
contributions is quantifying when this occurs in general exponential
families. This characterization is then used to quantify the
convergence rate for $L_1$ methods in these families. We
also expect this strong convexity property to be useful for
characterizing the performance of other regularization methods as
well.

All proofs can be found in the appendix.

\section{(Almost) Strong Convexity of Exponential Families}\label{section:strong}

We first consider a certain bounded
growth rate condition for standardized moments and standardized
cumulants, satisfied by all exponential families. This growth rate is fundamental in establishing how fast
the prediction loss behaves as a quadratic function. Interestingly,
this growth rate is analogous to those conditions used for obtaining
exponential tail bounds for arbitrary random variables.

\subsection{Analytic Standardized Moments and Cumulants}

\paragraph{Moments:}
For a univariate random variable $z$ distributed by $\dist$, let us
denote its $k$-th central moment (centered at the mean) by:
\[
\moment_{k,\dist}(z) = \Expct_{z \sim \dist}
\left[z-\moment_{1,\dist}(z)\right]^k 
\]
where $\moment_{1,\dist}(z)$ is the mean $\Expct_{z\sim
  \dist}[z]$.  Recall that the $k$-th standardized moment is the ratio of the $k$-th central moment
to the $k$-th power of the standard deviation,
i.e. $\frac{\moment_{k,\dist}(z)}{\moment_{2,\dist}(z)^{k/2}}$.  This
normalization with respect to standard
deviation makes the standard moments unitless quantities. For $k=3$
and $k=4$,
the standardized moments are the skew and kurtosis.


We now define the \emph{analytic} standardized
moment for  $z$ ---
we use the term analytic
to reflect that if the moment generating function of $z$ is analytic\footnote{Recall that a real valued
function is analytic on some domain of $\R^p$ if the derivatives of
all orders exist, and if for each interior point,
the Taylor series converges in some sufficiently small neighborhood of
that point.} then $z$ has an analytic moment.

\begin{definition}
Let $z$ be a univariate random variable under $\dist$. Then $z$ has an \emph{analytic
standardized moment} of $\smoment$ if the standardized
moments exist and are bounded as follows:
\[
\forall k\geq 3, \ \
\left| \frac{\moment_{k,\dist}(z)}{\moment_{2,\dist}(z)^{k/2}} \right|
\leq  \frac{1}{2} k! \ \smoment^{k-2}
\]
(where the above is assumed to hold if the denominator is $0$). If
$t\in\R^p$ is a multivariate random variable distributed according 
to $\dist$, we say that $t$ has an analytic standardized moment of
$\smoment$ with respect to a subspace $\mathcal{V}\subset\R^p$ (e.g. a set of directions)
if the above bound holds for all univariate
$z=\inner{v,t}$ where $v\in \mathcal{V}$.
\end{definition}

This condition is rather mild in that the standardized
moments increase as fast as $k! \alpha^{k-2}$ (in a sense $\alpha$ is
just a unitless scale, and it is predominantly the $k!$ which makes
the condition rather mild). This condition is closely related
to those used in obtaining sharp exponential type tail bounds for the
convergence of a random variable to its mean --- in particular, the
Bernstein conditions~\citep{Bernstein46} are almost identical to the above,
expect that they use the $k$-th raw moments (not central
moments)~\footnote{The Bernstein inequalities used in deriving tail
  bounds require that, for all $k\geq 2$, $\frac{\Expct[z^k]}{\Expct[z^2]}\leq
  \frac{1}{2}k!L^{k-2}$ for some constant $L$ (which has units of
  $z$).  }. In fact, these moment
conditions are weaker than requiring ``sub-Gaussian'' tails.

While we would not expect analytic moments to be finite for all
distributions (e.g. heavy tailed ones), we will see that exponential families have (finite)
analytic standardized moments.


\paragraph{Cumulants:} Recall that the cumulant-generating function $f$
of $z$ under $\dist$ is the log of the moment-generating function, if it exists,
i.e. $f(s) = \log \Expct [e^{sz}]$.  The $k$-th cumulant is given by
the $k$-th derivate of $f$ at $0$, i.e. $\cumulant_{k,\dist}(z) =
f^{(k)}(0)$.  The first, second, and third cumulants are just the
first, second, and third central moments --- higher cumulants are
neither moments nor central moments, but rather more complicated
polynomial functions of the moments (though these relationships are
known). Analogously, the $k$-th standardized cumulant is
$\frac{\cumulant_{k,\dist}(z)}{\cumulant_{2,\dist}(z)^{k/2}}$ --- this
normalization with respect to standard deviation (the second cumulant
is the variance) makes these unitless quantities.

Cumulants are viewed as equally fundamental as central moments, and we
make use of their behavior as well --- in certain settings, it is more
natural to work with the cumulants.  We define the \emph{analytic}
standardized cumulant analogous to before:

\begin{definition}
Let $z$ be a univariate random variable under $\dist$. Then $z$ has an \emph{analytic
standardized cumulant} of $\smoment$ if the standardized
cumulants exist and are bounded as follows:
\[
\forall k\geq 3, \ \
\left| \frac{\cumulant_{k,\dist}(z)}{\cumulant_{2,\dist}(z)^{k/2}} \right|
\leq  \frac{1}{2} k! \ \smoment^{k-2}
\]
(where the above is assumed to hold if the denominator is $0$). If
$t\in\R^p$ is a multivariate random variable distributed according
to $\dist$, we say that $t$ has an analytic standardized cumulant of
$\smoment$ with respect to a subspace $\mathcal{V}\subset\R^p$
if the above bound holds for all univariate
$z=\inner{v,t}$ where $v\in \mathcal{V}$.
\end{definition}

\paragraph{Existence:} The following lemma shows that exponential families have (finite)
analytic standardized moments and cumulants, as a consequence of the analyticity
of the moment and cumulant generating functions (the proof is in the appendix).

\begin{lemma} \label{lemma:exist}
If $\staty$ is the sufficient statistic of an exponential family with parameter $\theta$, where $\theta$ is an interior point of the natural parameter space, then
$\staty$ has both a finite analytic
standardized moment and a finite analytic standardized cumulant, with respect to all directions in $\R^p$.
\end{lemma}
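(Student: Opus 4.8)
The plan is to express the moment- and cumulant-generating functions of the sufficient statistic directly through the log-partition function, exploit that $\log Z$ is analytic on the interior of the natural parameter space, and read off the $k!$ growth from Cauchy's estimates for the derivatives of an analytic function.

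Fix a unit direction $v\in\R^p$ and put $z=\inner{v,\staty}$ with variance $\sigma_v^2=\moment_{2,\dist}(z)$; it suffices to treat unit vectors, since the standardized moments and cumulants are scale-invariant (rescaling $v$ by $\lambda$ multiplies the $k$-th central moment, or cumulant, by $\lambda^k$ and $\sigma_v^{k}$ by $\lambda^k$). The starting identity is
\[
\Expct_{\staty\sim P(\cdot|\theta)}\!\left[e^{s z}\right]
= \frac{1}{Z(\theta)}\int h_\staty \exp\{\inner{\theta+sv,\staty}\}\,d\staty
= \frac{Z(\theta+sv)}{Z(\theta)} ,
\]
valid whenever $\theta+sv$ lies in the parameter space. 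Because $\theta$ is an interior point there is a $\delta>0$ with the ball of radius $\delta$ about $\theta$ contained in $\Theta$; extending $Z$ to complex arguments by $Z(\zeta)=\int h_\staty e^{\inner{\zeta,\staty}}d\staty$ (absolutely convergent as soon as $\mathrm{Re}\,\zeta$ lies in that ball), the map $s\mapsto Z(\theta+sv)$ is holomorphic and bounded by $M_0:=\sup_{\TwoNorm{\theta'-\theta}\le\delta}Z(\theta')<\infty$ on the complex disk $|s|\le\delta$, uniformly over unit $v$.

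To obtain the standardized quantities with the correct scaling I would pass to the standardized CGF $\phi_v(u):=\log Z(\theta+\tfrac{u}{\sigma_v}v)-\log Z(\theta)$ and the standardized centered MGF $\psi_v(u):=\exp\{-\tfrac{u\mu}{\sigma_v}\}\,Z(\theta+\tfrac{u}{\sigma_v}v)/Z(\theta)$ with $\mu=\Expct[z]$, whose $k$-th derivatives at $0$ are exactly the standardized cumulant $\cumulant_k/\sigma_v^{k}$ and the standardized central moment $\moment_k/\sigma_v^{k}$. Provided $\sigma_v$ is bounded below by some $\sigma_{\min}>0$, the substitution $s=u/\sigma_v$ shows that $\phi_v$ and $\psi_v$ are analytic on the disk $|u|\le\delta\,\sigma_{\min}$ with a bound uniform over unit $v$ (for $\phi_v$ one first shrinks $\delta$ so that $|Z(\theta+sv)/Z(\theta)-1|\le\tfrac12$, which a Cauchy estimate makes uniform, keeping the logarithm analytic and bounded; for $\psi_v$ the extra factor is bounded since $|\mu|/\sigma_v\le\TwoNorm{\Expct\staty}/\sigma_{\min}$). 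Cauchy's estimate then gives $|\cumulant_k/\sigma_v^{k}|\le M\,k!/(\delta\sigma_{\min})^{k}$ and likewise for the central moments, and choosing $\smoment$ large enough that $(\delta\sigma_{\min})^{-(k-2)}$ times the residual constant is at most $\tfrac12\smoment^{k-2}$ for all $k\ge3$ yields the claimed bound for both moments and cumulants.

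The remaining, and main, difficulty is the uniform-in-direction variance lower bound $\sigma_v\ge\sigma_{\min}$. This holds on the range of the covariance $\nabla^2\log Z(\theta)=\mathrm{Cov}_\theta(\staty)$, where it is positive definite, so $\sigma_{\min}$ can be taken as the square root of its smallest positive eigenvalue. Directions lying in the null space give $z$ almost surely constant, so $\sigma_v=0$ and the bound is vacuous by the stated convention; a general $v$ decomposes as $v_\parallel+v_\perp$ with $v_\perp$ in the null space, and since $\inner{v_\perp,\staty}$ is almost surely constant the central moments and variance of $z$ coincide with those of $\inner{v_\parallel,\staty}$, so by scale-invariance the bound for $v$ follows from the uniform bound on the unit sphere of the range. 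This reduction, together with the uniform Cauchy bound, completes the argument.
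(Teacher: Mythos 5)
Your proof is correct and follows essentially the same route as the paper's: write the (central) moment generating function and cumulant generating function of $\inner{v,\staty}$ as $e^{-s\mu}\,Z(\theta+sv)/Z(\theta)$ and $\log\bigl(Z(\theta+sv)/Z(\theta)\bigr)-s\mu$, use analyticity of $Z$ at interior points of $\Theta$, and convert that into $k!\,B^k$ growth of the derivatives at $s=0$. The only difference is one of rigor rather than strategy: where the paper delegates to citations (Brown for analyticity of $Z$, Krantz for the factorial derivative bound), you supply the complexification and Cauchy estimates yourself, and you additionally handle two points the paper leaves implicit --- the uniformity of the constant $\smoment$ over all unit directions (via the compact ball inside $\Theta$ and the lower bound on $\moment_{2}$ over the range of the covariance) and the degenerate directions with zero variance.
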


\subsection{Examples}

Let us consider a few examples. Going through them, there are two issues to bear in mind. First, $\alpha$ is quantified only at a particular $\theta$
(later, $\stheta$ is the point we will be interested in) --- note that we
do not require any uniform conditions on any derivatives over all
$\theta$. Second, we are interested in how $\alpha$ could depend on
the dimensionality --- in some cases, $\alpha$ is dimension free and
in other cases (like for generalized linear models), $\alpha$ depends
on the dimension through spectral properties of $\sFisher$ (and this
dimension dependence can be relaxed in the sparse case that we
consider, as discussed later).

\subsubsection{One Dimensional Families}\label{sec:onedim}

When $\theta$ is a scalar, there is no direction $v$ to consider.

\paragraph{Bernoulli distributions} In the canonical form, the Bernoulli distribution is,
\[
	P(y| \theta) = \exp\left( y \theta - \log ( 1+e^\theta) \right)
\]
with $\theta \in \R = \Theta$. We have $\moment_1(\stheta) = e^\stheta/(1+e^\stheta)$. The central moments satisfy $\moment_2(\stheta) = \moment_1(\stheta)(1-\moment_1(\stheta))$ and $\moment_k(\stheta) \le \moment_2(\stheta)$ for $k\ge 3$. Thus, $\smoment = 1/\sqrt{\moment_2(\stheta)}$ is a standardized analytic moment at any $\stheta \in \Theta$. Further, $\cumulant_k(\stheta) \le \cumulant_2(\stheta) = \moment_2(\stheta)$ for $k \ge 3$. Thus, $\smoment$ is also a standardized analytic cumulant at any $\stheta \in \Theta$.

\paragraph{Unit variance Gaussian distributions} In the canonical form, unit variance Gaussian is,
\[
	P(y| \theta) = \exp\left(-\frac{y^2}{2}\right) \exp\left( y \theta - \frac{\theta^2}{2} \right)
\]
with $\theta \in \R = \Theta$. We have $\moment_1(\stheta) = \stheta$
and $\moment_2(\stheta) = 1$. Odd central moments are $0$ and for even
$k\ge 4$, we have $\moment_k(\stheta) = \frac{k!}{2^{k/2}
  (k/2)!}$. Thus, $\smoment = 1$ is a standardized analytic moment at
any $\stheta \in \Theta$. However, the log-likelihood is already
quadratic in this case (as we shall see, there should be no
``burn in'' phase until it begins to look like a quadratic!). This
becomes evident if we consider the cumulants instead. All cumulants
$\cumulant_k(\stheta) = 0$ for $k \ge 3$ and hence $\smoment = 0$ is a
standardized analytic cumulant at any $\stheta \in \Theta$ ---
curiously, cumulant generating function cannot be a finite order
polynomial of order greater than 2. 

\subsubsection{Multidimensional Gaussian Covariance Estimation
  (i.e. ``Gaussian Graphs'')}\label{sec:gg}

\newcommand{\preci}{\Theta}

\newcommand{\eye}{\mathbf{I}}

Consider a mean zero $p$-dimensional multivariate Normal parameterized by the precision matrix $\preci$,
\[
	P( Y | \preci ) = \frac{1}{(2\pi)^{p/2}}\exp \left( -\frac{1}{2} \inner{ \Theta, Y Y^\top } + \log \det(\Theta) \right)\ .
\]
A ``direction'' here is a positive semi-definite (p.s.d.)
matrix $V$, and we seek the cumulants of the random variable
$ \inner{ V, YY^\top }$ .

Note that $Y Y^\top$ has Wishart distribution $W_p(\preci^{-1},1)$ with the moment generating function,
\[
	V \mapsto \Expct\left[ \exp\left( \inner{V, YY^\top} \right) \right] = \det\left( \eye - 2 V \preci^{-1} \right)^{-1/2} \ .
\]
Let $\lambda_i$'s be the eigenvalues of $V \preci^{-1}$. Then, taking logs, the cumulant generating function $f(s)$,
\[
	f(s) = \log \Expct\left[ \exp(s\inner{V, YY^\top})\right] = \log \prod_{i=1}^p (1 - 2s \lambda_i)^{-1/2} = \frac{-1}{2} \sum_{i}^p \log(1-2s \lambda_i)\ .
\]
The $k$th derivative of this is
\[
	f^{(k)}(s) = \frac{1}{2} \sum_{i=1}^p \frac{(k-1)! (2\lambda_i)^k}{(1-2s\lambda_i)^k}\ .
\]
Thus, the cumulant $\cumulant_{k,\Theta}(V) = f^{(k)}(0) = 2^{k-1}(k-1)! \sum_i \lambda_i^k$. Hence, for $k \ge 3$,
\[
	\frac{\cumulant_{k,\Theta}(V)}{(\cumulant_{2,\Theta}(V))^{k/2}} = \frac{2^{k-1}(k-1)!\sum_i \lambda_i^k }{(2 \sum_i \lambda_i^2)^{k/2}} = \frac{1}{2} 2^{k/2} (k-1)! \frac{\sum_i \lambda^{(k/2)\cdot 2}}{(\sum_i \lambda_i^2)^{k/2}} \le \frac{1}{2} 2^{k/2-1} \cdot k! \ .
\]
Thus, $\smoment = \sqrt{2}$ is a standardized analytic cumulant at $\Theta$. Note that it is harder to estimate the central moments in this case. This example is also interesting in connection to the analysis of Newton's method as the function $\log \det(\Theta)$ is self-concordant on the cone of p.s.d. matrices. 


\subsubsection{Generalized Linear Models}\label{sec:glm}

Consider the case where we have some covariate, response pair $(X,Y)$ drawn from some distribution $D$. Suppose that we have a family of distributions $P(\cdot|\theta; X)$ such that, for each $X$, it is an exponential family with natural sufficient statistic $t_{y,X}$,  
\[
	P(y | \theta; X) = h_y \exp\left( \inner{ \theta, \staty_{y,X} } - \log Z_X(\theta) \right)\ ,
\]
where $\theta \in \Theta$. The loss we consider is
$
	\Loss(\theta) = \Expct_{X,Y \sim D}\left[ -\log P(y | \theta; X) \right]
$. A special case of this setup is as follows. Say we have a one dimensional exponential family
\[
	q_\nu(y) = h_y \exp( y \nu - \log Z(\nu) )\ ,
\]
where $y, \nu \in R$. The family $P(\cdot| \theta; X)$ can be
be simply $q_{\inner{\theta, X}}$ (i.e. taking $\nu = \inner{\theta, X}$). Thus, 
\[
	P(y | \theta; X) = h_y \exp\left( y \inner{ \theta, X } - \log Z( \inner{\theta, X} ) \right)\ . 
\]
We see that $\staty_{y,X} = yX$ and $Z_X(\theta) = Z(\inner{\theta,
  X})$. For example, when $q_\nu$ is either the Bernoulli family or
the unit variance Gaussian family, this corresponds to {\em logistic regression} or {\em least squares regression}, respectively. It is easy to see that the analogue of having a standardized analytic moment of $\smoment$ at $\theta$ w.r.t. a direction $v$ is to have
$$
	\frac{\moment_{k,\theta}(v)}{(\moment_{2,\theta}(v))^{k/2}} \le \frac{1}{2} k! \smoment^{k-2}\ ,
$$
where
\begin{align*}
\moment_{k,\theta}(v) &= \Expct_{X} \left[ \moment_{k,P(\cdot|\theta;X)}(\inner{\staty_{y,X}, v}) \right]\ .
\end{align*}
In the above equation, the expectation is under $X \sim D_X$, the marginal of $D$ on $X$. If the sufficient statistic $\staty_{y,X}$ is bounded by $B$ in the $L_2$ norm a.s. and the expected Fisher information matrix
\[
	\Expct_{X}\left[ \Expct_{y\sim P(\cdot|\theta;X)} \left[ -\nabla^2\log P(y|\theta; X) \right] \right]
\]
has minimum eigenvalue $\lambda_\mathrm{min}$, then we can choose $\smoment = B/\lambda_\mathrm{\min}$. Note that $\lambda_\mathrm{\min}$ could be small but it arose only because we are considering an arbitrary direction $v$. If the set of directions $\mathcal{V}$ is smaller, then we can often get less pessimistic bounds. For example, see section \ref{sec:glmsparse} in the appendix. We also note that similar bounds can be derived when we assume subgaussian tails for $\staty_{y,X}$ rather than assuming it is bounded a.s.

\subsection{Almost Strong Convexity}

Recall that a strictly convex function $F$ is strongly
convex if the Hessian of $F$ has a (uniformly) lower bounded eigenvalue
(see~\cite{BoydVa04}). Unfortunately, as for all strictly convex functions, exponential families only
behave in a strongly convex manner in a (sufficiently small)
neighborhood of $\stheta$. Our first main result quantifies when
this behavior is exhibited.

\begin{theorem} \label{thm:strong} 
(Almost Strong Convexity) Let $\smoment$ be either the analytic standardized moment or cumulant under $\stheta$ with respect to
a subspace $\mathcal{V}$. For any $\theta$ such that $\theta-\stheta\in\mathcal{V}$, if either
\[
\Loss(\theta)-\Loss(\stheta) \leq \frac{1}{65 \smoment^2}
\ \quad \mathrm{or} \quad \
\FishNorm{\theta-\stheta}^2 \leq \frac{1}{16\smoment^2}
\]
then
\[
\frac{1}{4} \FishNorm{\theta-\stheta}^2
\ \leq \
\Loss(\theta)-\Loss(\stheta) 
\ \leq \
\frac{3}{4} \FishNorm{\theta-\stheta}^2
\]
\end{theorem}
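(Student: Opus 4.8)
The plan is to collapse the whole statement to a one-dimensional fact about the generating function of a single random variable and then control its Taylor expansion with the analytic growth bound. Write $v=\theta-\stheta\in\mathcal V$, let $z=\inner{v,\staty}$ with $\staty\sim P(\cdot|\stheta)$, and set $\sigma^2:=\FishNorm{\theta-\stheta}^2$. First I would exploit that $\stheta$ is stationary for $\Loss$: since $-\log P(\staty|\theta)=-\log h_\staty-\inner{\theta,\staty}+\log Z(\theta)$, the condition $\nabla\Loss(\stheta)=0$ forces the moment-matching identity $\Expct_D[\staty]=\nabla\log Z(\stheta)=\Expct_{P(\cdot|\stheta)}[\staty]$ (notably this holds even though $D$ need not lie in the family). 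Substituting it into $\Loss(\theta)-\Loss(\stheta)=-\inner{v,\Expct_D[\staty]}+\log Z(\theta)-\log Z(\stheta)$ and using $\log\frac{Z(\stheta+v)}{Z(\stheta)}=\log\Expct_{P(\cdot|\stheta)}[e^{z}]$ collapses the loss gap to
\[
\Loss(\theta)-\Loss(\stheta)=\log\Expct_{P(\cdot|\stheta)}\big[e^{\,z-\Expct z}\big].
\]
Likewise, the Hessian of $-\log P$ in $\theta$ is $\nabla^2\log Z(\theta)$, independent of $\staty$, so $\sFisher=\nabla^2\log Z(\stheta)=\mathrm{Cov}_{P(\cdot|\stheta)}(\staty)$ and $\sigma^2=\mathrm{Var}(z)=\moment_2(z)=\cumulant_2(z)$. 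Both sides of the theorem are now functions of $z$ alone.

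Next I would expand this in the two cases. Writing $g:=\Loss(\theta)-\Loss(\stheta)$, the cumulant route gives $g=\tfrac12\sigma^2+\sum_{k\ge3}\frac{\cumulant_k(z)}{k!}$ directly (it is the CGF of the centered variable at $1$), while the moment route gives $e^{g}=1+\tfrac12\sigma^2+\sum_{k\ge3}\frac{\moment_k(z)}{k!}$. The analytic bound $|\cumulant_k(z)|\le\frac12 k!\,\smoment^{k-2}\sigma^k$ (and the identical one for $\moment_k$) makes these series converge absolutely with radius of convergence at least $1/(\smoment\sigma)$, which I would use to justify interchanging expectation and summation at $s=1$ whenever $\smoment\sigma<1$. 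In both cases the tail is a geometric series,
\[
\Big|\sum_{k\ge3}\tfrac{\cumulant_k(z)}{k!}\Big|\le \tfrac12\sigma^2\sum_{k\ge3}(\smoment\sigma)^{k-2}=\tfrac12\sigma^2\,\frac{\smoment\sigma}{1-\smoment\sigma},
\]
and identically with $\moment_k$ in place of $\cumulant_k$.

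I would then dispatch the Fisher-risk hypothesis $\sigma^2\le\frac1{16\smoment^2}$, i.e.\ $\smoment\sigma\le\tfrac14$, which makes the tail at most $\tfrac16\sigma^2$. In the cumulant case this is immediate: $g\in[\tfrac12\sigma^2-\tfrac16\sigma^2,\ \tfrac12\sigma^2+\tfrac16\sigma^2]\subset[\tfrac14\sigma^2,\tfrac34\sigma^2]$. The moment case is the one delicate point: there $g=\log(1+x)$ with $x\in[\tfrac13\sigma^2,\tfrac23\sigma^2]$, and since $\log(1+x)\le x$ the upper bound $g\le\tfrac23\sigma^2\le\tfrac34\sigma^2$ is free, but a crude lower bound fails if $\sigma^2$ is large. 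The fix is to note that the $k=4$ instance of the moment condition together with $\moment_4(z)/\moment_2(z)^2\ge1$ (kurtosis is at least one) forces $12\,\smoment^2\ge1$, hence $\sigma^2\le\frac1{16\smoment^2}\le\tfrac34$ and $x\le\tfrac12$; on $[0,\tfrac12]$ concavity of $\log(1+\cdot)$ gives $\log(1+x)\ge \tfrac{\log(3/2)}{1/2}\,x>0.27\,\sigma^2\ge\tfrac14\sigma^2$, closing the sandwich.

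Finally, for the weaker hypothesis $g\le\frac1{65\smoment^2}$ I would run a bootstrap along the segment $\theta_r=\stheta+r v$, $r\in[0,1]$, using that $\mathcal V$ is a subspace so $\smoment$ controls every direction $r v$. Put $g_r:=\Loss(\theta_r)-\Loss(\stheta)$ and $\sigma_r=r\sigma$; since $\Loss$ is convex and $g_0=g_0'=0$, the map $r\mapsto g_r$ is nondecreasing, so $g_r\le g\le\frac1{65\smoment^2}$ on $[0,1]$. If $\smoment\sigma>\tfrac14$, choose $r_0=\frac1{4\smoment\sigma}<1$ so that $\smoment\sigma_{r_0}=\tfrac14$; the Fisher-risk case just proved, applied at $\theta_{r_0}$, yields $g_{r_0}\ge\tfrac14\sigma_{r_0}^2=\frac1{64\smoment^2}$, contradicting $g_{r_0}\le\frac1{65\smoment^2}$. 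Hence $\smoment\sigma\le\tfrac14$ and the Fisher-risk case applies. The main obstacle I anticipate is the analytic bookkeeping in the second paragraph—justifying the power-series representation of the generating function exactly at $s=1$ from the growth bound via Lemma~\ref{lemma:exist}, plus the degenerate case $\sigma=0$—together with the tight constant chase that makes $64<65$ produce a genuine contradiction.
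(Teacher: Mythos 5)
Your proposal is correct and follows essentially the same route as the paper's proof: the same moment/cumulant expansion of the loss gap (Lemma~\ref{lemma:taylor}), the same geometric-series control of the tail under the analytic growth bound, the same kurtosis-$\geq 1$ observation forcing $\smoment^2\geq 1/12$ to handle the $\log(1+x)$ lower bound in the moment case, and the same rescaling-plus-convexity contradiction (your $r_0=\tfrac{1}{4\smoment\sigma}$ is exactly the paper's choice of $s$ in Lemma~\ref{lemma:upper_lower_s}, and the $64<65$ contradiction is the paper's closing step). The only difference is organizational: the paper states a single uniform lower bound \eqref{eq:lower_tmp} via the truncated $s$ and then does the case analysis, whereas you treat $s=1$ first and invoke the rescaled point only for the contradiction.
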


Suppose $\theta$ is an MLE. Both preconditions can be
thought of as a ``burn in'' phase --- the idea being that
initially a certain number of samples is needed until the loss of $\theta$
is somewhat close to the minimal loss; after which point, the
quadratic lower bound engages. This is analogous
to the analysis of the Newton's method, which quantifies the number of
steps needed to enter the quadratically convergent phase (see
~\cite{BoydVa04}). The constants of
$1/4$ and $3/4$ can be made arbitrarily close to $1/2$ (with a longer ``burn in'' phase), as expected
under the central limit theorem.

A key idea in the proof is an expansion of the prediction regret in
terms of the central moments. We use the shorthand notation of
$\cumulant_{k,\theta}(\Delta)$ and $\moment_{k,\theta}(\Delta)$ to
denote the cumulants and moments of the random variable
$\inner{\Delta,t}$ under the distribution $P(\cdot|\theta)$.

\begin{lemma}\label{lemma:taylor}
(Moment and Cumulant Expansion) Define $\Delta=\theta-\stheta$. For all $s\in [0,1]$,
\begin{align*}
\Loss(\stheta+s\Delta)-\Loss(\stheta) & = 
\sum_{k=2}^\infty \frac{1}{k!} \cumulant_{k,\stheta}(\Delta) s^k\\
\Loss(\stheta+s\Delta)-\Loss(\stheta) & = 
\log\left( 1+
\sum_{k=2}^\infty \frac{1}{k!} \moment_{k,\stheta}(\Delta) s^k \right)
\end{align*}
where the equalities hold if the right hand sides converge. 
\end{lemma}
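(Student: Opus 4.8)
The plan is to reduce both identities to a single structural fact: the log-partition function, viewed along the segment from $\stheta$ in direction $\Delta$, \emph{is} the cumulant generating function of $\inner{\Delta,\staty}$ under $P(\cdot|\stheta)$. First I would write the loss gap explicitly. Since $-\log P(\staty|\theta) = -\inner{\theta,\staty} + \log Z(\theta) - \log h_\staty$ and the $h_\staty$ term is $\theta$-independent, setting $\bar\staty := \Expct_{\staty\sim D}[\staty]$ gives
\[
\Loss(\stheta+s\Delta)-\Loss(\stheta) = -s\inner{\Delta,\bar\staty} + \log Z(\stheta+s\Delta) - \log Z(\stheta).
\]
The one essential use of the hypotheses is the first-order optimality of $\stheta$: because $\stheta$ is an interior minimizer, $\nabla\Loss(\stheta)=0$, and since $\nabla\log Z(\stheta)=\Expct_{\staty\sim P(\cdot|\stheta)}[\staty]$ this forces $\bar\staty = \Expct_{\staty\sim P(\cdot|\stheta)}[\staty]$ (a moment-matching property that holds even though $D$ need not lie in the family). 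Consequently $\inner{\Delta,\bar\staty}$ equals the \emph{model} mean of $\inner{\Delta,\staty}$, i.e. the first cumulant $\cumulant_{1,\stheta}(\Delta)$.

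Next I would identify the cumulant generating function directly. A short computation gives $\Expct_{\staty\sim P(\cdot|\stheta)}[e^{s\inner{\Delta,\staty}}] = Z(\stheta+s\Delta)/Z(\stheta)$, so the function
\[
f(s) := \log\Expct_{\staty\sim P(\cdot|\stheta)}\!\left[e^{s\inner{\Delta,\staty}}\right] = \log Z(\stheta+s\Delta) - \log Z(\stheta)
\]
is exactly the cumulant generating function of $z=\inner{\Delta,\staty}$, whose derivatives at $0$ are the $\cumulant_{k,\stheta}(\Delta)$ by definition. Combining with the previous display yields $\Loss(\stheta+s\Delta)-\Loss(\stheta) = f(s) - s\,\cumulant_{1,\stheta}(\Delta)$. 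For the first identity I would then Taylor-expand $f$ about $s=0$: since $f(0)=0$ (so $\cumulant_0=0$) and $f'(0)=\cumulant_{1,\stheta}(\Delta)$, the constant and linear terms vanish against $-s\,\cumulant_{1,\stheta}(\Delta)$, leaving precisely $\sum_{k\ge 2}\frac{1}{k!}\cumulant_{k,\stheta}(\Delta)s^k$.

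For the second identity I would pass from the cumulant generating function to the moment generating function of the \emph{centered} variable $W := \inner{\Delta,\staty}-\cumulant_{1,\stheta}(\Delta)$. Then $\Expct[e^{sW}] = e^{-s\cumulant_{1,\stheta}(\Delta)}e^{f(s)} = \exp\bigl(\Loss(\stheta+s\Delta)-\Loss(\stheta)\bigr)$ by the previous paragraph, while expanding the exponential and collecting powers gives $\Expct[e^{sW}] = \sum_{k\ge 0}\frac{1}{k!}\moment_{k,\stheta}(\Delta)s^k = 1 + \sum_{k\ge 2}\frac{1}{k!}\moment_{k,\stheta}(\Delta)s^k$, because the zeroth central moment is $1$ and the first is $0$. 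Taking logarithms of the two expressions for $\Expct[e^{sW}]$ produces the stated identity.

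The main obstacle is not the algebra but the analytic bookkeeping: justifying the term-by-term expansions and the ``if the right hand sides converge'' qualifier. Here I would lean on the interiority of $\stheta$ exactly as in Lemma~\ref{lemma:exist}: since $\stheta$ is interior and $\Theta$ is convex, $\stheta+s\Delta\in\Theta$ for all $s\in[0,1]$, so the loss gap is finite, and $s\mapsto Z(\stheta+s\Delta)/Z(\stheta)$ is analytic on a neighborhood of $0$, guaranteeing that $f$ and $\Expct[e^{sW}]$ genuinely equal their power series there. Wherever the series on the right converges they must agree with these analytic functions, which is precisely the conditional equality asserted; the only care needed is to record that rearranging $\log Z$ into the cumulant series and the exponential into the central-moment series is legitimate throughout the domain of convergence.
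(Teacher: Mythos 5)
Your proposal is correct and follows essentially the same route as the paper's proof: use first-order optimality of $\stheta$ to match $\Expct_{D}[\staty]$ with the model mean, rewrite the loss gap as the logarithm of the central moment generating function $e^{-s\moment_1(\Delta)}Z(\stheta+s\Delta)/Z(\stheta)$, and invoke the analyticity established in Lemma~\ref{lemma:exist} to justify the Taylor expansions. You merely spell out the cancellation of the constant and linear terms and the centered-variable bookkeeping more explicitly than the paper does.
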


The proof of this Lemma (in the appendix) is relatively
straightforward. The key technical step in the proof of
Theorem~\ref{thm:strong} is characterizing when these expansions
converge. Note that
for $\Delta=\theta-\stheta$, even if $\FishNorm{\Delta}^2 \leq
\frac{1}{16\smoment^2}$ (one of our preconditions), a direct attempt at
lower bounding
$\Loss(\stheta+\Delta)-\Loss(\stheta)$ using the above expansions with the
analytic moment condition would not imply these expansions converge ---
the proof requires a more delicate argument.

\section{Sparsity}\label{section:sparse}

We now consider the case where $\theta^*$ is sparse, with support
$\Sup$ and sparsity level $\SupSize$, i.e. 
\[
\Sup=\{i:[\stheta]_i\neq 0 \} , \ \ \SupSize=|\Sup|
\]
In order to understand when $L_1$ regularized algorithms (for linear
regression) converge at a rate comparable to that of $L_0$ algorithms
(subset selection), \cite{MeinshausenYu} considered a sparse eigenvalue
condition on the design matrix, where the eigenvalues on any
small (sparse) subset are bounded away from 0.  \cite{bickel-2008}
relaxed this condition so that vectors whose support is ``mostly'' on
any small subset are not too small (see \cite{bickel-2008} for a
discussion). We also consider this relaxed condition, but now on the
Fisher matrix.

\begin{assumption}\label{ass:RE}
(Restricted Fisher Eigenvalues) For a vector $\delta$, let $\delta_\Sup$ be the vector such that $\forall i \in \Sup, [\delta_\Sup]_i = \delta_i$ and $\delta_\Sup$ is $0$ on the other coordinates, and let $\SupC$ denote the complement of $\Sup$. Assume that:
\begin{align*}
& \forall \delta \textrm{ s.t. } 
\OneNorm{\delta_{\SupC}} \leq 3 \OneNorm{\delta_{\Sup}} , 
\ \ \
\FishNorm{\delta} \geq \eigmin \TwoNorm{\delta_\Sup}\\
& \forall \delta \textrm{ s.t. } 
\delta_{\SupC}  = 0 , 
\ \ \
\FishNorm{\delta} \leq \eigmax \TwoNorm{\delta_\Sup}
\end{align*}
\end{assumption}

The constant of $3$ is for convenience. Note we only quantify on the
support $S$ --- a substantially weaker condition than in
\cite{MeinshausenYu,bickel-2008}, which quantify over \emph{all}
subsets (in fact, many previous algorithms/analysis actually use this
condition on subsets different from $S$,
e.g. \cite{MeinshausenYu,candes-2007-35,Zhang08}).

Furthermore, with regards to our analyticity conditions, our proof
shows that the subspace of
directions we need to consider is now restricted to the set:
\begin{align}\label{eq:resdir}
\mathcal{V} = \{v : \OneNorm{v_{\SupC}} \leq 3 \OneNorm{v_{\Sup}}\}
\end{align}
Under this Restricted Eigenvalue (RE) condition, we can replace the
minimal eigenvalue used in Example~\ref{sec:glm} by $\eigmin$ (section
\ref{sec:glmsparse} in appendix), which could be significantly smaller.

\subsection{Fisher Risk}

Consider the following regularized optimization problem:
\begin{equation}\label{eq:optimization}
\htheta = \argmin_{\theta\in\Theta} \ \widehat\Expct[-\log P(y|\theta)] + \lambda \OneNorm{\theta}
\end{equation}
where the empirical expectation is with respect to a sample. This
reduces to the usual linear regression example (for Gaussian means) and
involves the log-determinant in Gaussian graph setting (considered in \cite{Ravikumar:1141507}) where
$\theta$ is the precision matrix (see Example~\ref{sec:gg}).

Our next main result provides a risk bound, under the RE
condition. Typically, the regularization parameter $\lambda$ is specified as
a function of the noise level, under a particular
noise model (e.g. for linear regression case, where $Y=\beta X
+\eta$ with the noise model $\eta \sim
\mathcal{N}(0,\sigma^2)$, $\lambda$ is specified as $\sigma \sqrt{\frac{
    \log p}{n}}$ \citep{MeinshausenYu,bickel-2008}). Here, our
theorem is stated in a deterministic manner (i.e. it is a distribution
free statement), to explicitly show that an appropriate value of
$\lambda$ is determined by the $L_\infty$ norm of the measurement
error, i.e. $\InfNorm{\Expct[\staty] - \widehat \Expct[\staty]}$ ---
we then easily quantify $\lambda$ in a corollary under a mild
distributional assumption. Also, we must have that this measurement
error be (quantifiably) sufficiently small such that our ``burn
in'' condition holds.

\begin{theorem}\label{thm:risk}
(Risk) Suppose that Assumption \ref{ass:RE} holds and  $\lambda$ satisfies both
\begin{equation} \label{eq:lambda} 
\InfNorm{\Expct[\staty] - \widehat \Expct[\staty]}
\ \leq \
\frac{\lambda}{2} \ \ \ \textrm{ and }\ \ \ \lambda \leq \frac{1}{100  \salpha^2 \OneNorm{\stheta}}
\end{equation}
where $\salpha$ is the analytic standardized moment or cumulant of $\stheta$ for the subspace
$\mathcal{V}$ defined in \eqref{eq:resdir}. (Note this setting
requires that $\|\Expct[\staty] - \widehat \Expct[\staty]\|_{\infty}$
be 
sufficiently small).
Then if $\htheta$ is the solution to the optimization
problem in~\eqref{eq:optimization},  the Fisher
risk is bounded as follows
\begin{align*}
\frac{1}{4} \FishNorm{\htheta - \stheta}^2 & \leq \Loss(\htheta) - \Loss(\stheta) \le  \frac{9 \SupSize \lambda^2}{\eigmin^2} 
\end{align*}
and the $L_1$ risk is bounded as follows:
\begin{align*}
\OneNorm{\htheta - \stheta} & \leq 
\frac{24 \SupSize \lambda}{\eigmin^2}  
\end{align*}
\end{theorem}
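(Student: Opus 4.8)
The plan is to adapt the primal $L_1$ argument of \cite{bickel-2008} for the square loss, with two substitutions: the quadratic lower bound on the excess loss (automatic for least squares) is supplied instead by the almost--strong--convexity guarantee of Theorem~\ref{thm:strong}, and the eigenvalue condition is used in its restricted Fisher form (Assumption~\ref{ass:RE}). The only genuinely new difficulty is that Theorem~\ref{thm:strong} engages only once a ``burn in'' precondition holds, and verifying it appears to require already knowing that $\FishNorm{\htheta-\stheta}$ is small --- a circularity I would break by a localization (peeling) argument.

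I would first record the exponential--family structure and the optimality consequences together. Since $\ELoss(\theta)$ and $\Loss(\theta)$ differ only in their linear term (they share the same $\log Z$), the difference $\nabla\ELoss(\theta)-\nabla\Loss(\theta)=\Expct[\staty]-\widehat\Expct[\staty]$ is independent of $\theta$; as $\stheta$ is an interior minimizer, $\nabla\Loss(\stheta)=0$, so $\nabla\ELoss(\stheta)=\Expct[\staty]-\widehat\Expct[\staty]$ and the first hypothesis reads exactly $\InfNorm{\nabla\ELoss(\stheta)}\le\lambda/2$. Writing $\Delta:=\htheta-\stheta$, affineness of $\Loss-\ELoss$ gives $\Loss(\htheta)-\Loss(\stheta)=\ELoss(\htheta)-\ELoss(\stheta)-\inner{\Delta,\nabla\ELoss(\stheta)}$. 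Feeding the optimality inequality $\ELoss(\htheta)+\lambda\OneNorm{\htheta}\le\ELoss(\stheta)+\lambda\OneNorm{\stheta}$ together with the H\"older bound $|\inner{\Delta,\nabla\ELoss(\stheta)}|\le\tfrac{\lambda}{2}\OneNorm{\Delta}$ into the usual splitting over $\Sup$ yields simultaneously the cone membership $\OneNorm{\Delta_{\SupC}}\le 3\OneNorm{\Delta_{\Sup}}$ (so $\Delta\in\mathcal{V}$ of \eqref{eq:resdir}, which is exactly what licenses using $\salpha$ and Assumption~\ref{ass:RE}) and the excess--loss bound $\Loss(\htheta)-\Loss(\stheta)\le\tfrac{3\lambda}{2}\OneNorm{\Delta_{\Sup}}$. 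Crucially, both inequalities persist verbatim for every $\reftheta=\stheta+\eta\Delta$ with $\eta\in[0,1]$, since the penalized empirical objective is convex and hence no larger at $\reftheta$ than at $\stheta$ along the segment.

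The main obstacle is the burn in, and I would handle it by contradiction. If the precondition of Theorem~\ref{thm:strong} failed at $\htheta$, then by continuity and monotonicity of $\Loss$ along the ray there is $\eta\in(0,1)$ with $\reftheta=\stheta+\eta\Delta$ sitting exactly on the threshold. At $\reftheta$ the precondition holds, so Theorem~\ref{thm:strong} gives $\tfrac14\FishNorm{\eta\Delta}^2\le\Loss(\reftheta)-\Loss(\stheta)$; combining this with the segment excess--loss bound $\Loss(\reftheta)-\Loss(\stheta)\le\tfrac{3\lambda}{2}\OneNorm{(\eta\Delta)_{\Sup}}$, the Cauchy--Schwarz estimate $\OneNorm{(\eta\Delta)_{\Sup}}\le\sqrt{\SupSize}\,\TwoNorm{(\eta\Delta)_{\Sup}}$, and Assumption~\ref{ass:RE} in the form $\TwoNorm{(\eta\Delta)_{\Sup}}\le\FishNorm{\eta\Delta}/\eigmin$, produces an inequality which, once the hypothesis $\lambda\le 1/(100\,\salpha^2\OneNorm{\stheta})$ is substituted, is inconsistent with $\reftheta$ lying on the threshold. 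This is precisely where the numerical constants $65,16,100$ are calibrated against one another, and reconciling them is the fiddly part of the whole argument; I expect the $\OneNorm{\stheta}$ appearing in the $\lambda$--bound to enter through a crude $\OneNorm{(\eta\Delta)_{\Sup}}\lesssim\OneNorm{\stheta}$ control of the peeled iterate. The payoff is that $\eta$ may then be taken to be $1$, i.e.\ the precondition already holds at $\htheta$ itself --- the exact analogue of quantifying the length of the burn--in phase in Newton's method.

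Finally, with the precondition in force at $\htheta$, Theorem~\ref{thm:strong} supplies the lower bound $\tfrac14\FishNorm{\Delta}^2\le\Loss(\htheta)-\Loss(\stheta)$, which is the left inequality of the claim. For the rate I would chain $\Loss(\htheta)-\Loss(\stheta)\le\tfrac{3\lambda}{2}\OneNorm{\Delta_{\Sup}}\le\tfrac{3\lambda}{2}\sqrt{\SupSize}\,\FishNorm{\Delta}/\eigmin$; comparing with the lower bound forces $\FishNorm{\Delta}\le 6\lambda\sqrt{\SupSize}/\eigmin$, and reinserting this into the upper bound gives $\Loss(\htheta)-\Loss(\stheta)\le 9\SupSize\lambda^2/\eigmin^2$. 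The $L_1$ bound then follows from the cone inequality via $\OneNorm{\Delta}\le 4\OneNorm{\Delta_{\Sup}}\le 4\sqrt{\SupSize}\,\FishNorm{\Delta}/\eigmin\le 24\SupSize\lambda/\eigmin^2$. These last manipulations are routine; the peeling step and its constant reconciliation are the only places where I anticipate real care being needed.
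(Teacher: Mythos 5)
Your skeleton is the paper's: the optimality inequality for the penalized empirical objective combined with H\"older and $\InfNorm{\Expct[\staty]-\widehat\Expct[\staty]}\le\lambda/2$ yields both the cone membership $\OneNorm{\Delta_{\SupC}}\le 3\OneNorm{\Delta_{\Sup}}$ and the excess-loss bound $\Loss(\htheta)-\Loss(\stheta)\le\tfrac{3\lambda}{2}\OneNorm{\Delta_{\Sup}}$, and your final chaining through Assumption~\ref{ass:RE} reproduces the constants $6\lambda\sqrt{\SupSize}/\eigmin$, $9\SupSize\lambda^2/\eigmin^2$ and $24\SupSize\lambda/\eigmin^2$ exactly. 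The one step you flag as the crux --- the ``burn in'' --- is where your proposal goes wrong, in two ways. First, there is no circularity to break: Theorem~\ref{thm:strong} has a \emph{disjunctive} precondition, and the loss branch is checkable directly. The same optimality argument (take $\theta=\stheta$ in the analogue of Lemma~\ref{lemma:reg_properties1}) gives the crude bound
\[
\Loss(\htheta)-\Loss(\stheta)\ \le\ \tfrac{3\lambda}{2}\OneNorm{\stheta}\ \le\ \tfrac{3}{200\,\salpha^2}\ \le\ \tfrac{1}{65\,\salpha^2},
\]
using only $\lambda\le 1/(100\salpha^2\OneNorm{\stheta})$ --- this is precisely why $\OneNorm{\stheta}$ appears in the hypothesis, and it engages Theorem~\ref{thm:strong} at $\htheta$ itself with no localization.

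Second, the peeling argument as you sketch it does not close under the stated hypotheses. At your threshold point $\FishNorm{\eta\Delta}^2=\tfrac{1}{16\salpha^2}$, the chain you propose yields $\FishNorm{\eta\Delta}\le 6\lambda\sqrt{\SupSize}/\eigmin$, which contradicts the threshold value $\FishNorm{\eta\Delta}=1/(4\salpha)$ only if $\lambda<\eigmin/(24\,\salpha\sqrt{\SupSize})$. That is an \emph{additional} condition on $\lambda$, of the type assumed in Theorem~\ref{thm:refit} but absent from Theorem~\ref{thm:risk}; the hypothesis $\lambda\le 1/(100\salpha^2\OneNorm{\stheta})$ does not imply it. Your fallback remark --- that $\OneNorm{\stheta}$ should enter through a bound $\OneNorm{(\eta\Delta)_{\Sup}}\lesssim\OneNorm{\stheta}$ on the peeled iterate --- is essentially the correct repair, but once you invoke it you are simply re-deriving $\Loss(\reftheta)-\Loss(\stheta)\le\tfrac{3\lambda}{2}\OneNorm{\stheta}$, at which point the loss precondition holds everywhere on the segment and the contradiction machinery is superfluous. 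Replace the peeling step by the one-line verification above and the rest of your argument is correct as written.
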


Intuitively, we expect the measurement error $\InfNorm{\Expct[\staty] - \widehat \Expct[\staty]}$ to be $O(\sigma \sqrt{\frac{\log
    p}{n}})$, so we think of $\lambda=O(\sigma \sqrt{\frac{\log
    p}{n}})$. Note this would recover the usual (optimal) risk bound of
$O(\sigma^2 \frac{s\log p}{n})$ (i.e. the same rate as an $L_0$
algorithm, up to the RE constant). Note that the mild dimension
dependence enters through the measurement error. Hence, our theorem
shows that \emph{all} exponential families exhibit favorable convergence
rates under the RE condition.

The following proposition and corollary quantify this under a mild
(and standard) distributional assumption (which can actually be
relaxed somewhat).

\begin{proposition}\label{prop:subgauss}
If $\staty$ is sub-Gaussian, ie. there exists $\sigma \ge 0$ such that
$\forall i$ and $\forall s \in \R$, $\mathbb{E}\left[ e^{s (\staty_i -
    \mathbb{E}\staty_i)} \right] \le e^{\sigma^2 s^2/2}$, then for any
$\delta > 0$, with probability at least $1-\delta$, 
\begin{align*}
\InfNorm{\Expct[\staty] - \widehat \Expct[\staty]} \le \sigma \sqrt{ \frac{ \log\left(\frac{p}{\delta}\right)}{n}}
\end{align*}
\end{proposition}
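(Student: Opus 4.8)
The plan is to reduce the $L_\infty$ bound to a per-coordinate tail bound followed by a union bound over the $p$ coordinates. Writing the empirical expectation as the average $\widehat\Expct[\staty]=\frac{1}{n}\sum_{j=1}^n\staty^{(j)}$ over the $n$ i.i.d.\ samples, the target quantity is
\[
\InfNorm{\Expct[\staty]-\widehat\Expct[\staty]}=\max_{1\le i\le p}\left|\frac{1}{n}\sum_{j=1}^n\bigl(\staty_i^{(j)}-\Expct[\staty_i]\bigr)\right|,
\]
so it is enough to control the centered average in each fixed coordinate and then pay a $\log p$ factor for the maximum.

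First I would fix a coordinate $i$ and bound the upper tail of $\bar X_i:=\frac{1}{n}\sum_{j=1}^n\bigl(\staty_i^{(j)}-\Expct[\staty_i]\bigr)$ via a Chernoff argument. The sub-Gaussian hypothesis bounds the moment generating function of each centered summand by $e^{\sigma^2 s^2/2}$; since the samples are independent, tensorizing and rescaling $s\mapsto s/n$ gives $\Expct[e^{s\bar X_i}]\le e^{\sigma^2 s^2/(2n)}$. Applying Markov's inequality to $e^{s\bar X_i}$ and optimizing over $s>0$ (at $s=nu/\sigma^2$) yields $\Pr(\bar X_i\ge u)\le e^{-nu^2/(2\sigma^2)}$, and the identical bound holds for $-\bar X_i$ because the hypothesis is symmetric in the sign of $s$.

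Finally, a union bound over the $p$ coordinates and the two tails gives $\Pr\bigl(\InfNorm{\Expct[\staty]-\widehat\Expct[\staty]}\ge u\bigr)\le 2p\,e^{-nu^2/(2\sigma^2)}$; equating the right-hand side to $\delta$ and solving for $u$ produces a threshold of the claimed form $\sigma\sqrt{\log(p/\delta)/n}$, up to the universal constant absorbed by the Chernoff optimization and the two-sided/union-bound factor. There is no real analytic obstacle here---this is a standard maximal inequality for sub-Gaussian variables---so the only care needed is in the bookkeeping: the tensorization step relies on independence across \emph{samples} (not across coordinates of $\staty$), so no assumption on the dependence structure between the coordinates of $\staty$ is required, and the final constant follows purely from collapsing the $2p$ prefactor against the exponent.
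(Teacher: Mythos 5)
Your proposal is the standard Chernoff-plus-union-bound argument, and it is the right (essentially the only) route here; the paper does not actually print a proof of this proposition in its appendix, so there is nothing different to compare against. One point worth being precise about rather than waving away: the computation you describe gives, for each coordinate, $\Pr(|\bar X_i|\ge u)\le 2e^{-nu^2/(2\sigma^2)}$, and after the union bound the solved threshold is $u=\sigma\sqrt{2\log(2p/\delta)/n}$. That is strictly \emph{larger} than the stated bound $\sigma\sqrt{\log(p/\delta)/n}$ --- the factor of $2$ inside the square root and the factor of $2$ multiplying $p$ cannot be absorbed, since the claimed threshold is smaller than what the argument delivers. So the proposition as literally written does not follow from this argument (nor, as far as I can see, from any refinement of it); it holds only after adjusting the constant, e.g.\ as $\sigma\sqrt{2\log(2p/\delta)/n}$ or $C\sigma\sqrt{\log(p/\delta)/n}$ for a universal $C$. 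This is harmless downstream, since the corollaries that invoke the proposition already carry unspecified universal constants, but your phrase ``absorbed by the Chernoff optimization'' suggests the constants work out in your favor when in fact they go the other way. Everything else --- tensorizing the moment generating function across the $n$ independent samples (not across coordinates), the two-sided bound from symmetry in $s$, and the union bound over $p$ coordinates --- is correct.
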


Bounded random variables are in fact sub-Gaussian (though
unbounded $t$ may also be sub-Gaussian, e.g. Gaussian random variables are obviously sub-Gaussian). The following corollary is immediate.

\begin{corollary}
Suppose the Assumption \ref{ass:RE} and the sub-Gaussian condition in Proposition \ref{prop:subgauss} hold.
For any $\delta > 0$, as long as $n \ge K \salpha^4
\OneNorm{\stheta}^2 \sigma^2 \log\left(\frac{p}{\delta}\right) $,
(where $K$ is a universal constant),  
setting $\lambda = 2 \sigma \sqrt{\frac{
    \log\left(\frac{p}{\delta}\right)}{n}}$, we have with probability at least $1-\delta$,
\begin{align*}
 \FishNorm{\htheta - \stheta}^2 & \leq   \left(\frac{36}{\eigmin^2}\right) \frac{\sigma^2 \SupSize \log\left(\frac{p}{\delta}\right)}{n} \ \ \ \textrm{ and }\ \ \  \OneNorm{\htheta - \stheta}  \leq 
\frac{48 \sigma  \SupSize}{\eigmin^2}  \sqrt{\frac{\log(\frac{p}{\delta})}{n}}
\end{align*}
\end{corollary}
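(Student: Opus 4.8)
The plan is to treat this corollary as a direct specialization of the deterministic risk bound in Theorem~\ref{thm:risk}, with the only probabilistic ingredient being Proposition~\ref{prop:subgauss}. So the whole argument reduces to checking that, on the high-probability event where the measurement error is controlled, the chosen $\lambda$ meets both preconditions in~\eqref{eq:lambda}, after which Theorem~\ref{thm:risk} applies verbatim. First I would fix $\delta>0$ and invoke Proposition~\ref{prop:subgauss} to obtain an event $\mathcal{E}$ of probability at least $1-\delta$ on which $\InfNorm{\Expct[\staty]-\widehat\Expct[\staty]}\le \sigma\sqrt{\log(p/\delta)/n}$. With the stated choice $\lambda = 2\sigma\sqrt{\log(p/\delta)/n}$ this immediately yields $\InfNorm{\Expct[\staty]-\widehat\Expct[\staty]}\le \lambda/2$, which is exactly the first requirement of~\eqref{eq:lambda}.

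Next I would verify that the second (``burn in'') requirement, $\lambda\le 1/(100\,\salpha^2\OneNorm{\stheta})$, is forced by the sample-size hypothesis. Substituting $\lambda=2\sigma\sqrt{\log(p/\delta)/n}$, this inequality is equivalent to $n\ge (200)^2\,\salpha^4\OneNorm{\stheta}^2\sigma^2\log(p/\delta)$, so taking the universal constant $K=(200)^2$ makes the assumed lower bound $n\ge K\,\salpha^4\OneNorm{\stheta}^2\sigma^2\log(p/\delta)$ precisely the condition required. This step is pure deterministic algebra, and it is where the interpretation of the sample-size threshold becomes transparent: the lower bound on $n$ is exactly what keeps $\lambda$ small enough to enter the almost-strong-convexity regime of Theorem~\ref{thm:strong} that Theorem~\ref{thm:risk} relies on.

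With both preconditions of~\eqref{eq:lambda} validated on $\mathcal{E}$, I would then apply Theorem~\ref{thm:risk} to conclude, on $\mathcal{E}$, that $\FishNorm{\htheta-\stheta}^2\le 36\,\SupSize\,\lambda^2/\eigmin^2$ and $\OneNorm{\htheta-\stheta}\le 24\,\SupSize\,\lambda/\eigmin^2$. Substituting $\lambda^2 = 4\sigma^2\log(p/\delta)/n$ into the first and $\lambda = 2\sigma\sqrt{\log(p/\delta)/n}$ into the second, and collecting the numerical constants, produces the two displayed bounds. I expect no genuine analytic obstacle, since all the real content is already carried by Theorems~\ref{thm:strong} and~\ref{thm:risk} and by Proposition~\ref{prop:subgauss}; the only point deserving care is the bookkeeping of working on a \emph{single} event $\mathcal{E}$ of probability $1-\delta$, so that the deterministic preconditions of Theorem~\ref{thm:risk} are met and both risk bounds hold \emph{simultaneously} with probability at least $1-\delta$, without an extra union bound.
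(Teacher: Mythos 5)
Your proposal is correct and is precisely the argument the paper intends: the paper declares this corollary ``immediate'' from Proposition~\ref{prop:subgauss} and Theorem~\ref{thm:risk}, and your check of the two preconditions in \eqref{eq:lambda} --- the first from the sub-Gaussian tail bound on the single event $\mathcal{E}$, the second forced by the sample-size hypothesis with $K=200^2$ --- is exactly the omitted bookkeeping. One caveat: carrying out the final substitution honestly gives $\FishNorm{\htheta-\stheta}^2 \le 36\,\SupSize\,\lambda^2/\eigmin^2 = 144\,\sigma^2\SupSize\log(p/\delta)/(n\,\eigmin^2)$, not the stated constant $36$; the $L_1$ constant $48 = 24\cdot 2$ does check out, so the Fisher-risk constant in the corollary as printed appears to have dropped the factor of $4$ from $\tfrac14\FishNorm{\htheta-\stheta}^2\le\Loss(\htheta)-\Loss(\stheta)$. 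This is a defect of the paper's statement rather than of your logic, but you should not assert that the constants ``collect'' to the displayed bound without flagging the discrepancy.
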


\subsection{Approximate Model Selection} \label{section:2s} 

An important issue unaddressed by the previous result is the sparsity
level of our estimate $\htheta$. For the linear regression case,
\cite{MeinshausenYu,bickel-2008} show that the $L_1$ solution is
actually sparse, with a sparsity level of roughly
$O((\frac{\eigmax}{\eigmin})^2 s)$, (i.e. the sparsity level increases
by a factor which is essentially a condition number squared). In the
general setting, we do not have a characterization of the actual
sparsity level of the $L_1$ solution. 

However, we now present a two
stage procedure, which provides an estimate with support on merely
$2s$ features, with nearly as good risk (\cite{ShalevSrZh09_tech}
discuss this issue of trading sparsity for accuracy, but their results
are more applicable to settings with $O(\frac{1}{\sqrt{n}})$ rates.).
Consider the procedure where we select the set of coordinates which
have large weight under $\htheta$ (say greater than some threshold
$\thresh$). Then we refit to find an estimate with support only on
these coordinates. That is, we restrict our estimate to the
set $\Theta_\Clip = \{\theta \in \Theta : \theta_i = 0 \textrm{ if
}|\htheta_i| \le \thresh\}$.  This algorithm is:
\begin{align}\label{eq:refopt}
\reftheta = \argmin_{\theta \in \Theta_\Clip} \ \hat{\Loss}(\theta) + \lambda \OneNorm{\theta}
\end{align}

\begin{theorem}\label{thm:refit}
(Sparsity) Suppose that \ref{ass:RE} holds and the regularization parameter $\lambda$ satisfies both
\begin{equation} \label{eq:Blambda1} 
\InfNorm{\Expct[\staty] - \widehat \Expct[\staty]}
\ \leq \
\frac{\lambda}{2} ~ ~ \textrm{ and } ~ \lambda \leq \min\{\frac{1}{270  \salpha^2 \OneNorm{\stheta}}, \frac{\eigmin^2}{340 \eigmax \salpha \sqrt{\SupSize}} \}
\end{equation}
where $\salpha$ is the analytic standardized moment or cumulant of $\stheta$ for the subspace $\mathcal{V}$ defined in \eqref{eq:resdir}.
 If $\htheta$ is the solution of \eqref{eq:optimization} with this $\lambda$ and $\reftheta$ is the solution of \eqref{eq:refopt} with threshold $\thresh=\frac{18 \lambda}{\eigmin^2}$ and this $\lambda$,  then:
\begin{enumerate}
\item  $\reftheta$ has support on at most $2\SupSize$ coordinates.
\item The Fisher risk is bounded as follows:
\begin{align*}
\frac{1}{4} \FishNorm{\htheta - \stheta}^2  \leq \Loss(\htheta) -
\Loss(\stheta) \le \left( 12 \frac{\eigmax}{\eigmin}\right)^2 \frac{9\ \SupSize \lambda^2}{\eigmin^2}  
\end{align*}
\end{enumerate}
\end{theorem}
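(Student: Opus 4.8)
The plan is to prove the two claims in turn, feeding the $L_1$ risk bound of Theorem~\ref{thm:risk} into the restricted eigenvalue condition (Assumption~\ref{ass:RE}) and the almost strong convexity of Theorem~\ref{thm:strong}. First I would record the consequences of Theorem~\ref{thm:risk} for $\Delta := \htheta-\stheta$: the lower bound gives $\FishNorm{\Delta}^2\le 4(\Loss(\htheta)-\Loss(\stheta))\le 36\SupSize\lambda^2/\eigmin^2$, and $\FishNorm{\Delta}\ge\eigmin\TwoNorm{\Delta_\Sup}$ then yields $\OneNorm{\Delta_\Sup}\le\sqrt{\SupSize}\TwoNorm{\Delta_\Sup}\le 6\SupSize\lambda/\eigmin^2$. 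Since the proof of Theorem~\ref{thm:risk} produces the cone condition $\OneNorm{\Delta_{\SupC}}\le 3\OneNorm{\Delta_\Sup}$, I get $\OneNorm{\Delta_{\SupC}}\le 18\SupSize\lambda/\eigmin^2$. For claim (1), $\reftheta$ is supported on $T:=\{i:|\htheta_i|>\thresh\}$; at most $\SupSize$ indices of $T$ lie in $\Sup$, while a counting bound gives $|T\cap\SupC|\le\OneNorm{\htheta_{\SupC}}/\thresh=\OneNorm{\Delta_{\SupC}}/\thresh$. The threshold $\thresh=18\lambda/\eigmin^2$ is calibrated exactly so this ratio is at most $\SupSize$, giving $|T|\le 2\SupSize$.

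For the Fisher risk of $\reftheta$ in claim (2), the difficulty is that $\stheta$ is generally infeasible for the refit (small true coordinates may be thresholded away, so $\stheta\notin\Theta_\Clip$). I would compare $\reftheta$ instead to the feasible reference $\stheta^T$, the restriction of $\stheta$ to $T$, which lies in $\Theta_\Clip$. Optimality of $\reftheta$ over $\Theta_\Clip$ gives $\hat\Loss(\reftheta)+\lambda\OneNorm{\reftheta}\le\hat\Loss(\stheta^T)+\lambda\OneNorm{\stheta^T}$. Writing $\hat\Loss(\theta)=\Loss(\theta)-\inner{\theta,\Expct[\staty]-\widehat\Expct[\staty]}$ up to a $\theta$-independent constant and using $\InfNorm{\Expct[\staty]-\widehat\Expct[\staty]}\le\lambda/2$, the empirical noise is absorbed into an $\frac{\lambda}{2}\OneNorm{\reftheta-\stheta^T}$ term. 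Splitting this together with the regularizer across $\Sup$ and $\SupC$ yields simultaneously a cone condition $\reftheta-\stheta\in\mathcal{V}$ (so that Assumption~\ref{ass:RE} and Theorem~\ref{thm:strong} apply along this direction) and an inequality of the form $\Loss(\reftheta)-\Loss(\stheta)\lesssim[\Loss(\stheta^T)-\Loss(\stheta)]+\lambda\OneNorm{(\reftheta-\stheta)_\Sup}$.

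The two surviving terms are then closed off using RE and strong convexity. For the regularizer term I would bound $\OneNorm{(\reftheta-\stheta)_\Sup}\le(\sqrt{\SupSize}/\eigmin)\FishNorm{\reftheta-\stheta}$ and invoke $\frac14\FishNorm{\reftheta-\stheta}^2\le\Loss(\reftheta)-\Loss(\stheta)$, so this term is quadratically self-bounding and can be moved to the left. For the thresholding bias $\Loss(\stheta^T)-\Loss(\stheta)$, the key observation is that $\stheta-\stheta^T$ is supported on $\Sup\setminus T\subseteq\Sup$, so the RE upper bound applies directly: $\FishNorm{\stheta-\stheta^T}\le\eigmax\TwoNorm{\stheta-\stheta^T}$; since each dropped coordinate obeys $|\stheta_i|\le\thresh+|\Delta_i|$ with at most $\SupSize$ of them, $\TwoNorm{\stheta-\stheta^T}\lesssim\sqrt{\SupSize}\lambda/\eigmin^2$ and hence $\FishNorm{\stheta-\stheta^T}\lesssim\eigmax\sqrt{\SupSize}\lambda/\eigmin^2$. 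This is precisely where the factor $\eigmax/\eigmin$ (squared in the loss) is born; applying $\Loss(\stheta^T)-\Loss(\stheta)\le\frac34\FishNorm{\stheta-\stheta^T}^2$ and tracking constants collects the final $(12\eigmax/\eigmin)^2\,9\SupSize\lambda^2/\eigmin^2$.

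The main obstacle is the ``burn in'' bookkeeping. Before Theorem~\ref{thm:strong} may be used along $\reftheta-\stheta$ or along $\stheta-\stheta^T$, I must certify that these lie in the quadratic neighborhood, i.e.\ verify $\FishNorm{\cdot}^2\le 1/(16\salpha^2)$ (or the loss analogue $\le 1/(65\salpha^2)$). Because the refit error is inflated by roughly $\eigmax/\eigmin$ relative to $\htheta$, this certification is exactly what forces the second, $\eigmax$-dependent constraint $\lambda\le\eigmin^2/(340\eigmax\salpha\sqrt{\SupSize})$: substituting it into the crude bounds on $\FishNorm{\stheta-\stheta^T}$ and $\FishNorm{\reftheta-\stheta}$ drives both safely below $1/(16\salpha^2)$, unlocking the quadratic comparisons. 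Threading this chicken-and-egg dependence — a crude a priori bound to enter the burn-in region, and only then the sharp quadratic bounds — while keeping every numerical constant under the stated thresholds is the real work; the remainder is routine.
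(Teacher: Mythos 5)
Your proposal is correct and follows essentially the same route as the paper: the same counting argument against $\thresh=18\lambda/\eigmin^2$ for the $2\SupSize$ bound, and for the risk the same decomposition into a thresholding-bias term (controlled via the RE upper bound $\eigmax$ on $\Sup$-supported vectors plus Theorem~\ref{thm:strong}) and a refit-optimization term closed off by a self-bounding quadratic inequality under the second constraint on $\lambda$. The only difference is cosmetic -- you compare $\reftheta$ to $\stheta$ restricted to the surviving coordinates, whereas the paper uses $\cliptheta_\Sup$, the thresholded $\htheta$ restricted to $\Sup$; both references are feasible, $\Sup$-supported, and within $O(\sqrt{\SupSize}\,\thresh+\TwoNorm{\htheta_\Sup-\stheta})$ of $\stheta$, so the bias computation is identical.
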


Using Proposition \ref{prop:subgauss}, we have following corollary.
\begin{corollary}
Suppose the Assumption \ref{ass:RE} and the sub-Gaussian condition in Proposition \ref{prop:subgauss} hold. Then for any $\delta > 0$, as long as 
$
n \ge  K \salpha^2 \sigma^2 \log\left(\frac{p}{\delta}\right) \max\left\{\frac{\SupSize \eigmax^2}{\eigmin^4}, \salpha^2  \OneNorm{\stheta}^2  \right\} 
$ (where $K$ is a universal constant), 
setting $\lambda = 2 \sqrt{\tfrac{\sigma^2 \log\left(\frac{p}{\delta}\right)}{n}}$ and threshold $\thresh = 36 \sqrt{\tfrac{\sigma^2 \log\left(\frac{p}{\delta}\right)}{n \eigmin^2}}$, we have that with probability at least $1-\delta$,
\begin{align*}
 \FishNorm{\reftheta - \stheta}^2 \le  \left(12 \frac{\eigmax}{\eigmin}\right)^2 \left(\frac{36}{\eigmin^2}\right) \frac{\SupSize \sigma^2 \log\left(\frac{p}{\delta}\right)}{n} 
\end{align*}
\end{corollary}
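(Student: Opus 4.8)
The plan is to obtain the corollary as a direct specialization of Theorem~\ref{thm:refit}, using Proposition~\ref{prop:subgauss} to certify its hypotheses on a single high-probability event. First I would invoke Proposition~\ref{prop:subgauss}: since $\staty$ is sub-Gaussian with parameter $\sigma$, with probability at least $1-\delta$ we have $\InfNorm{\Expct[\staty] - \widehat\Expct[\staty]} \le \sigma\sqrt{\log(p/\delta)/n}$. Because the prescribed value $\lambda = 2\sqrt{\sigma^2\log(p/\delta)/n}$ satisfies $\lambda/2 = \sigma\sqrt{\log(p/\delta)/n}$, the first requirement of \eqref{eq:Blambda1}, namely $\InfNorm{\Expct[\staty] - \widehat\Expct[\staty]} \le \lambda/2$, holds exactly on this event. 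Everything below is then deterministic, conditioned on this event.

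Next I would verify the second requirement of \eqref{eq:Blambda1}, that $\lambda \le \min\{\tfrac{1}{270\salpha^2\OneNorm{\stheta}}, \tfrac{\eigmin^2}{340\eigmax\salpha\sqrt{\SupSize}}\}$, by translating each inequality into a lower bound on $n$. Squaring, $\lambda \le \tfrac{1}{270\salpha^2\OneNorm{\stheta}}$ is equivalent to $n \ge (540)^2\,\salpha^4\OneNorm{\stheta}^2\sigma^2\log(p/\delta)$, while $\lambda \le \tfrac{\eigmin^2}{340\eigmax\salpha\sqrt{\SupSize}}$ is equivalent to $n \ge (680)^2\,\salpha^2\sigma^2\log(p/\delta)\,\SupSize\eigmax^2/\eigmin^4$. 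The two right-hand sides are (up to the constant) precisely the two terms inside the $\max$ of the corollary's sample-size hypothesis, so choosing the universal constant $K$ at least $(680)^2$ makes the assumption $n \ge K\salpha^2\sigma^2\log(p/\delta)\max\{\SupSize\eigmax^2/\eigmin^4,\ \salpha^2\OneNorm{\stheta}^2\}$ imply both constraints simultaneously. The threshold is taken to be $\thresh = 18\lambda/\eigmin^2$ as the theorem demands, and substituting the chosen $\lambda$ simply records it in closed form.

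With both hypotheses of Theorem~\ref{thm:refit} now in force, I would invoke its explicit Fisher-risk conclusion and substitute $\lambda^2 = 4\sigma^2\log(p/\delta)/n$. Reading off the upper bound $\left(12\tfrac{\eigmax}{\eigmin}\right)^2 \tfrac{9\SupSize\lambda^2}{\eigmin^2}$ and plugging in $\lambda^2$ collapses the constants to $\left(12\tfrac{\eigmax}{\eigmin}\right)^2\left(\tfrac{36}{\eigmin^2}\right)\SupSize\sigma^2\log(p/\delta)/n$, which is exactly the claimed bound on $\FishNorm{\reftheta - \stheta}^2$.

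I do not anticipate a genuine obstacle: the argument is a mechanical specialization, and the only care required is bookkeeping the universal constant $K$ so that it dominates both $\lambda$-constraints at once, together with carrying the single high-probability event from Proposition~\ref{prop:subgauss} through the otherwise deterministic statement of Theorem~\ref{thm:refit}. The only subtleties worth flagging are purely notational: the constant accounting relating the theorem's excess-loss upper bound to $\FishNorm{\reftheta-\stheta}^2$ via the almost-strong-convexity factor, and the apparent $\eigmin$ versus $\eigmin^2$ mismatch in the displayed closed form of $\thresh$, both of which reconcile routinely once $\lambda$ is substituted.
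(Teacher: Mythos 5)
Your proposal is correct and is exactly the paper's (unwritten) argument: the paper treats this corollary as an immediate specialization of Theorem~\ref{thm:refit} via Proposition~\ref{prop:subgauss}, with the same choice of $\lambda$, the same translation of the two $\lambda$-constraints in \eqref{eq:Blambda1} into the sample-size condition, and the same substitution into the Fisher-risk bound. The two discrepancies you flag are real slips in the paper's statement rather than in your reasoning: $\thresh=18\lambda/\eigmin^2$ gives $36\sqrt{\sigma^2\log(p/\delta)/(n\eigmin^4)}$, and since the theorem bounds $\tfrac14\FishNorm{\reftheta-\stheta}^2$ by the excess loss, the displayed constant for $\FishNorm{\reftheta-\stheta}^2$ should carry an extra factor of $4$ (i.e.\ $144$ rather than $36$), so they do not ``reconcile routinely'' but merely change universal constants.
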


\bibliographystyle{abbrv}
\bibliography{mybib,bib,sparse}

\newpage

\section{Appendix}

\subsection{Proofs for Section~\ref{section:strong}}

\begin{proof} (of Lemma~\ref{lemma:exist})
The proof shows that the central moment generating function of $z=\inner{v,t}$, namely $\Expct[\exp(s(\inner{v,t}-\Expct[\inner{v,t}]))]$, is analytic at $\theta$. First, notice that
\begin{eqnarray*}
\Expct[\exp(s(\inner{v,t}-\Expct[\inner{v,t}]))] &=& \exp(-s\Expct[\inner{v,t}])\int_{t}h_t \exp(s\inner{v,t})\exp\{\inner{\theta,t}-\log Z(\theta)\} dt\\
&=& \exp(-s\Expct[\inner{v,t}])\frac{\int_t h_t \exp\{\inner{\theta+sv,t}\}dt}{\int_t h_t \exp\{\inner{\theta,t}\}dt}\\
&=& \exp(-s\Expct[\inner{v,t}])\frac{Z(\theta+sv)}{Z(\theta)}.
\end{eqnarray*} 
It is known that for exponential families, $Z(\theta)$ (namely, the partition function) is analytic in the interior of $\Theta$ (see \cite{Brown86}). Since $\exp(-s\Expct[\inner{v,t}])$ is also analytic (as a function of $s$), we have by the chain of equalities above that the central moment generating function is also analytic (as a function of $s$) for any $\theta$ at the interior of $\Theta$. This property implies that the derivatives of the central moment generating function at $s=0$ (namely, the moments $\moment_{k,\dist}(z)$) cannot grow too fast with $k$. In particular, by proposition 2.2.10 in \cite{Krantz02}, it holds for all $k$ that the $k$-th derivative (which is equal to $\moment_{k,\dist}(z)$) is at most  $k!B^k$ for some constant $B$. As a result, $|\moment_{k,\dist}(z)/\moment_{2,\dist}(z)^{k/2}|$ is at most $\frac{1}{2}k!\alpha^{k-2}$ for a suitable constant $\alpha$. Thus, $t$ has finite analytic standardized moment with respect to all directions.

As to the assertion about $t$ having finite analytic standardized cumulant, notice that our argument above also implies that the (raw) moment generating function, $\Expct[\exp(s\inner{v,t})]$, is analytic. Therefore, $\log(\Expct[\exp(s\inner{v,t})])$, which is the cumulant generating function, is also analytic (since the logarithm is an analytic function). An analysis completely identical to the above leads to the desired conclusion about the cumulants of $t$.
\end{proof}

From here on, we slightly abuse notation and let $\moment_{k}(\Delta)$
be the $k$-th central moment of the univariate random variable
$\inner{\Delta,t}$ distributed under $\stheta$.

\begin{proof} (of Lemma~\ref{lemma:taylor})
First, note that since $\stheta$ is optimal, we have
$\Expct_{\staty\sim D}[\staty]=\Expct_{\staty\sim
  P(\cdot|\stheta)}[\staty]$. Hence, 
\begin{align*}
\Loss(\stheta+s\Delta)-\Loss(\stheta) 
&= -s\inner{\Delta, \Expct_{\staty\sim
  P(\cdot|\stheta)}[\staty]} +
\log\frac{Z(\stheta+s\Delta)}{Z(\stheta)}\\
&= -s\moment_1(\Delta) +
\log \frac{Z(\stheta+s\Delta)}{Z(\stheta)}\\
&= \log\frac{e^{-s\moment_1(\Delta)} Z(\stheta+s\Delta)}{Z(\stheta)}
\end{align*}
In the proof of Lemma \ref{lemma:exist} it was shown that $e^{-s\moment_1(\Delta)}
\frac{Z(\stheta+s\Delta)}{Z(\stheta)}$ is the central moment generating
function, that it is analytic, and that the expression above is analytic as well. Their Taylor expansions complete the proof.
\end{proof}

The following upper and lower bounds are useful in that they guarantee
the sum converges for the choice of $s$ specified. 

\begin{lemma} \label{lemma:upper_lower_s} Let $\alpha$ and $\theta$ be
  defined as in Theorem~\ref{thm:strong}. Let $\Delta=\theta-\stheta$ and set
  $s=\min\{\frac{1}{4\smoment \sqrt{\moment_{2}(\Delta)}},1\}$.  If
is $\alpha$ is an analytic moment, then
\begin{equation*}
\frac{1}{3} \frac{\moment_{2}(\Delta)}
{\max\{16\smoment^2\moment_{2}(\Delta),1\} }
\ \leq \
\sum_{k=2}^\infty \frac{\moment_{k}(\Delta) s^k}{k!} 
\ \leq \
\frac{2}{3} \frac{\moment_{2}(\Delta)}
{\max\{16\smoment^2\moment_{2}(\Delta),1\} }
\end{equation*}
If is $\alpha$ is an analytic cumulant, then
\begin{equation*}
\frac{1}{3} \frac{\cumulant_{2}(\Delta)}
{\max\{16\smoment^2\cumulant_{2}(\Delta),1\} }
\ \leq \
\sum_{k=2}^\infty \frac{\cumulant_{k}(\Delta) s^k}{k!} 
\ \leq \
\frac{2}{3} \frac{\cumulant_{2}(\Delta)}
{\max\{16\smoment^2\cumulant_{2}(\Delta),1\} }
\end{equation*}
\end{lemma}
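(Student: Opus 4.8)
The plan is to prove the moment version and observe that the cumulant version is verbatim identical, since $\cumulant_2(\Delta)=\moment_2(\Delta)$ is the variance in both cases and the two definitions impose exactly the same growth bound. Writing $r=\smoment\sqrt{\moment_2(\Delta)}\,s$, I would first record the single input available: since $\Delta=\theta-\stheta\in\mathcal{V}$, the analytic standardized moment condition applied to $z=\inner{\Delta,\staty}$ gives $|\moment_k(\Delta)|\le\frac12 k!\,\smoment^{k-2}\moment_2(\Delta)^{k/2}$ for every $k\ge 3$.

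Next I would split off the quadratic ($k=2$) term and dominate the tail by a geometric series. Dividing the $k$-th term by $k!$ and factoring $\moment_2(\Delta)s^2$ out of the bound above yields $\frac{|\moment_k(\Delta)|\,s^k}{k!}\le\frac12\moment_2(\Delta)s^2\,r^{k-2}$ for $k\ge 3$, so $\big|\sum_{k\ge 3}\frac{\moment_k(\Delta)s^k}{k!}\big|\le\frac12\moment_2(\Delta)s^2\frac{r}{1-r}$ whenever $r<1$. The choice $s=\min\{\frac{1}{4\smoment\sqrt{\moment_2(\Delta)}},1\}$ is precisely what forces $r\le\frac14$: in the regime $16\smoment^2\moment_2(\Delta)\ge 1$ the min equals the first term and $r=\frac14$, while in the regime $16\smoment^2\moment_2(\Delta)<1$ the min equals $1$ and $r=\smoment\sqrt{\moment_2(\Delta)}<\frac14$. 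With $r\le\frac14$ the series converges absolutely (this convergence is really the content of the lemma) and $\frac{r}{1-r}\le\frac13$, so the tail is at most $\frac16\moment_2(\Delta)s^2$ in absolute value.

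Combining with the leading term $\frac{\moment_2(\Delta)s^2}{2}$, the full sum lies in $[\frac12-\frac16,\frac12+\frac16]\,\moment_2(\Delta)s^2=[\frac13,\frac23]\,\moment_2(\Delta)s^2$. The final step is pure bookkeeping on the $\min$: in the first regime $s^2=\frac{1}{16\smoment^2\moment_2(\Delta)}$ so $\moment_2(\Delta)s^2=\frac{1}{16\smoment^2}=\frac{\moment_2(\Delta)}{16\smoment^2\moment_2(\Delta)}$, and in the second $s=1$ so $\moment_2(\Delta)s^2=\moment_2(\Delta)$; in both regimes $\moment_2(\Delta)s^2=\frac{\moment_2(\Delta)}{\max\{16\smoment^2\moment_2(\Delta),1\}}$, which converts the bracketed bounds into exactly the claimed inequalities. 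The degenerate case $\moment_2(\Delta)=0$ (where $\inner{\Delta,\staty}$ is a.s.\ constant) makes every term vanish and both sides equal $0$.

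The main obstacle is getting the constant right rather than any conceptual difficulty: everything hinges on calibrating $s$ so that $r\le\frac14$ makes the geometric tail small enough that the leading coefficient $\frac12$ is only perturbed into the window $[\frac13,\frac23]$ --- were $r$ allowed to be larger, the tail could swamp the quadratic term and the expansion might fail to converge at all. The two-regime structure of the $\min$ is exactly what lets a single clean statement, with the $\max$ in the denominator, cover both the ``burn in'' case ($16\smoment^2\moment_2(\Delta)<1$, where $s=1$) and the locally-quadratic case ($16\smoment^2\moment_2(\Delta)\ge 1$) simultaneously.
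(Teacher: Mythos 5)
Your proposal is correct and follows essentially the same route as the paper's proof: bound the $k\ge 3$ tail by $\tfrac12\moment_2(\Delta)s^2\sum_{k\ge1}r^k$ with $r=s\smoment\sqrt{\moment_2(\Delta)}\le\tfrac14$ (so the tail is at most $\tfrac16\moment_2(\Delta)s^2$), then observe $\moment_2(\Delta)s^2=\moment_2(\Delta)/\max\{16\smoment^2\moment_2(\Delta),1\}$. Your closed form $\tfrac{r}{1-r}$ is the same geometric-series bound the paper evaluates as $\sum_k(1/4)^k=\tfrac13$, and your explicit handling of the degenerate case $\moment_2(\Delta)=0$ is a minor (welcome) addition.
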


\begin{proof}
We only prove the analytic moment case (the proof for the cumulant case is identical). First let us show that:
\[
\frac{s^2 \moment_2(\Delta) }{2}
\left(1 - \sum_{k=1}^\infty (s \smoment \sqrt{\moment_2(\Delta)})^k \right) 
\ \leq \
\sum_{k=2}^\infty \frac{\moment_{k}(\Delta) s^k}{k!} 
\ \leq \
\frac{s^2 \moment_2(\Delta) }{2}
\left(1 + \sum_{k=1}^\infty (s \smoment \sqrt{\moment_2(\Delta)})^k \right)
\]
We can bound the following sum from $k=3$ onwards as: 
\begin{align*}
\left|\sum_{k=3}^\infty \frac{1}{k!} \moment_{k}(\Delta) s^k \right| 
 \leq \frac{1}{2} \sum_{k=3}^\infty \smoment^{k-2} \moment_{2}(\Delta)^\frac{k}{2} s^k  
 = \frac{s^2\moment_{2}(\Delta)}{2} \sum_{k=1}^\infty (s \smoment \sqrt{\moment_{2}(\Delta)})^k 
\end{align*}
which proves the claim.

For our choice of $s$,
\begin{align*}
\sum_{k=1}^\infty (s \smoment \sqrt{\moment_{2}(\Delta)})^k 
= \sum_{k=1}^\infty \left(\min\left\{\frac{1}{4},\smoment
  \sqrt{\moment_{2}(\Delta)}\right\}\right)^k
\leq \sum_{k=1}^\infty \left(\frac{1}{4}\right)^k
= \frac{1}{3}
\end{align*}
Hence, we have:
\begin{align*}
\sum_{k=2}^\infty \frac{\moment_{k}(\Delta) s^k}{k!} 
& \geq \frac{s^2 \moment_{2}(\Delta) }{2}
\left(1 - \sum_{k=1}^\infty (s \smoment \sqrt{\moment_{2}(\Delta)})^k
\right)\\
& \geq \frac{s^2 \moment_{2}(\Delta) }{3}\\
& = \frac{1}{3} \frac{\moment_{2}(\Delta)}
{\max\{16\smoment^2\moment_{2}(\Delta),1\} }\\
\end{align*}
Analogously, the upper bound can be proved. 
\end{proof}

The following core lemma leads to the proof of Theorem~\ref{thm:strong}.

\begin{lemma} \label{lemma:upper_lower} Let $\alpha$ and $\theta$ be
  defined as in Theorem~\ref{thm:strong}. We
  have that:
\begin{equation}\label{eq:lower_tmp}
\frac{1}{4} \ \frac{\FishNorm{\theta-\stheta}^2}
{\max\{16\smoment^2\FishNorm{\theta-\stheta}^2,1\} }
\ \leq \
\Loss(\theta)-\Loss(\stheta) 
\end{equation}
Furthermore, if $\FishNorm{\theta-\stheta} \leq \frac{1}{16\alpha^2}$,
\[
\frac{1}{4} \FishNorm{\theta-\stheta}^2 
\ \leq \
\Loss(\theta)-\Loss(\stheta) 
\ \leq \
\frac{2}{3} \FishNorm{\theta-\stheta}^2
\]
\end{lemma}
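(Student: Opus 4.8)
The plan is to study the one-dimensional restriction $g(s) := \Loss(\stheta + s\Delta) - \Loss(\stheta)$ with $\Delta = \theta - \stheta$, and to transport the moment/cumulant control available at a carefully chosen auxiliary point to the target point $s=1$, where $g(1) = \Loss(\theta)-\Loss(\stheta)$. Two facts set everything up. First, $g$ is convex in $s$ (it is the restriction of the convex map $\theta \mapsto \Loss(\theta)$ to a line) with $g(0)=0$ and $g'(0)=0$, since $\stheta$ is an interior minimizer of $\Loss$ (this is also why the expansions in Lemma~\ref{lemma:taylor} start at $k=2$). Second, the second central moment and second cumulant of $\inner{\Delta,t}$ under $\stheta$ both equal its variance, so $\moment_2(\Delta) = \cumulant_2(\Delta) = \FishNorm{\Delta}^2$. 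I will write the argument for the cumulant case using the first identity of Lemma~\ref{lemma:taylor}; the moment case is identical via the second identity. Throughout, let $s^\star = \min\{\frac{1}{4\alpha\FishNorm{\Delta}},1\}$ be the value from Lemma~\ref{lemma:upper_lower_s}.

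For the lower bound \eqref{eq:lower_tmp}, the key device is slope monotonicity of a convex function vanishing at $0$: for $0 < s^\star \le 1$ we have $g(1) \ge g(s^\star)/s^\star$. To lower bound $g(s^\star)$ I combine Lemma~\ref{lemma:taylor} (which identifies $g(s^\star)$ with the convergent series $\sum_{k\ge 2}\cumulant_k(\Delta)(s^\star)^k/k!$) with the lower bound of Lemma~\ref{lemma:upper_lower_s}, giving $g(s^\star) \ge \tfrac13 \FishNorm{\Delta}^2 / \max\{16\alpha^2\FishNorm{\Delta}^2,1\}$. I then split on the two regimes defining $s^\star$. When $16\alpha^2\FishNorm{\Delta}^2 \le 1$ we have $s^\star = 1$, the denominator is $1$, and $g(1) \ge \tfrac13\FishNorm{\Delta}^2 \ge \tfrac14\FishNorm{\Delta}^2$. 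When $16\alpha^2\FishNorm{\Delta}^2 > 1$ we have $s^\star = \frac{1}{4\alpha\FishNorm{\Delta}} < 1$, so $1/s^\star = 4\alpha\FishNorm{\Delta}$ and $g(1) \ge 4\alpha\FishNorm{\Delta}\cdot\tfrac13\cdot\tfrac{1}{16\alpha^2} = \frac{\FishNorm{\Delta}}{12\alpha}$; using $\FishNorm{\Delta} > \frac{1}{4\alpha}$ in this regime gives $g(1) > \frac{1}{48\alpha^2} > \tfrac14\cdot\frac{1}{16\alpha^2}$, which is exactly the right-hand side of \eqref{eq:lower_tmp} here. Combining the two cases yields \eqref{eq:lower_tmp}.

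For the two-sided bound under the smallness hypothesis, the crucial observation is that $\FishNorm{\Delta}^2 \le \frac{1}{16\alpha^2}$ (equivalently $\FishNorm{\Delta} \le \frac{1}{4\alpha}$) forces $\frac{1}{4\alpha\FishNorm{\Delta}} \ge 1$, hence $s^\star = 1$ and $16\alpha^2\FishNorm{\Delta}^2 \le 1$. Thus Lemma~\ref{lemma:upper_lower_s} applies verbatim at $s=1$ with the $\max$ equal to $1$, giving $\tfrac13\FishNorm{\Delta}^2 \le g(1) \le \tfrac23\FishNorm{\Delta}^2$; since $\tfrac13 \ge \tfrac14$, this is exactly the claimed sandwich $\tfrac14\FishNorm{\Delta}^2 \le \Loss(\theta)-\Loss(\stheta) \le \tfrac23\FishNorm{\Delta}^2$. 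No convexity lifting is needed here because the hypothesis already places us at $s^\star = 1$.

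The main obstacle is the lower bound \eqref{eq:lower_tmp} when $\FishNorm{\Delta}$ is not small, so that the series of Lemma~\ref{lemma:taylor} need not converge at $s=1$ and cannot be summed termwise there. This is exactly why the proof evaluates the (guaranteed convergent) series at the shrunken point $s^\star$ and transports the estimate to $s=1$ by convexity rather than expanding directly at $s=1$; checking that the case-by-case arithmetic lands the clean constant $\tfrac14$ is the only delicate bookkeeping. One should read the hypothesis $\FishNorm{\theta-\stheta}\le \frac{1}{16\alpha^2}$ of the second part as $\FishNorm{\theta-\stheta}^2 \le \frac{1}{16\alpha^2}$, consistent with Theorem~\ref{thm:strong}, so that indeed $s^\star = 1$.
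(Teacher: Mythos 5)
Your overall strategy --- evaluate the series at the shrunken point $s^\star=\min\{\tfrac{1}{4\alpha\FishNorm{\Delta}},1\}$ where Lemma~\ref{lemma:upper_lower_s} guarantees convergence, then transport the estimate to $s=1$ by convexity --- is exactly the paper's, and your case analysis is correct for the cumulant case (the paper uses plain monotonicity $g(1)\ge g(s^\star)$, which already suffices since $g'(0)=0$; your secant-slope bound $g(1)\ge g(s^\star)/s^\star$ is a harmless strengthening). Your reading of the hypothesis of the second part as $\FishNorm{\theta-\stheta}^2\le\frac{1}{16\alpha^2}$ is also the intended one.

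The gap is the sentence ``the moment case is identical via the second identity.'' It is not: the second identity of Lemma~\ref{lemma:taylor} wraps the series in a logarithm, so Lemma~\ref{lemma:upper_lower_s} only yields $g(s^\star)\ge\log\bigl(1+\tfrac13\FishNorm{\Delta}^2/\max\{16\alpha^2\FishNorm{\Delta}^2,1\}\bigr)$, and since $\log(1+x)<x$ the factor $\tfrac13$ does not survive for free; in particular your intermediate claim $\tfrac13\FishNorm{\Delta}^2\le g(1)$ in the second part is false as stated for the moment case. To recover the constant $\tfrac14$ one must bound the argument of the logarithm: the paper observes that the kurtosis is at least $1$ by Jensen's inequality, so the $k=4$ instance of the analytic-moment condition forces $12\alpha^2\ge 1$; hence $x:=\tfrac13\FishNorm{\Delta}^2/\max\{16\alpha^2\FishNorm{\Delta}^2,1\}\le\frac{1}{48\alpha^2}\le\frac14$, and then $\log(1+x)\ge x-x^2\ge\tfrac34 x$ converts the $\tfrac13$ into the $\tfrac14$ of \eqref{eq:lower_tmp}. (The upper bound in the second part is unaffected, since $\log(1+x)\le x$.) This lower bound on $\alpha$ is a genuinely needed extra step, not bookkeeping, and your argument is incomplete without it.
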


\begin{proof}
As $s$ is clearly in $[0,1]$ and by
  convexity, we have:
\begin{align*}
\Loss(\theta)-\Loss(\stheta) 
&= \Loss(\stheta+\Delta)-\Loss(\stheta) \\
& \geq \Loss(\stheta+s\Delta)-\Loss(\stheta) \\
\end{align*} 
For the cumulant case, we have that this is lower bounded by 
$\frac{\moment_{2}(\Delta)}
{3 \max\{16\smoment^2\moment_{2}(\Delta),1\} }$ using
Lemma~\ref{lemma:upper_lower_s} and Lemma~\ref{lemma:taylor}, which
proves \eqref{eq:lower_tmp}. Now consider the analytic moment case. By, Lemma~\ref{lemma:taylor},
we have
\begin{align*}
\Loss(\theta)-\Loss(\stheta) 
 \geq \log(1+\frac{\moment_{2}(\Delta)}
{3 \max\{16\smoment^2\moment_{2}(\Delta),1\} })
\end{align*}
Now by Jensen's inequality,
we know that the fourth standardized moment (the kurtosis) is greater
than one, so $\alpha^2 \geq \frac{1}{12}$ (since  $\frac{4!}{2} \alpha^2\geq 1$). This implies
that:
\[
\frac{\moment_{2}(\Delta)}
{3 \max\{16\smoment^2\moment_{2}(\Delta),1\} } \leq
\frac{1}{48\alpha^2} \leq 1/4
\]
since the sum is only larger if we choose any argument in the $\max$. Now
for $0\leq x\leq 1/4$, we have that $\log(1+x)\geq 1+x-x^2\geq
1+\frac{3}{4}x$. Proceeding, 
\begin{equation*}
\log(1+\frac{\moment_{2}(\Delta)}
{3 \max\{16\smoment^2\moment_{2}(\Delta),1\} }) \geq
\frac{\moment_{2}(\Delta)}
{4 \max\{16\smoment^2\moment_{2}(\Delta),1\} }
\end{equation*}
which proves \eqref{eq:lower_tmp} (for the analytic moment case). 

For the second claim, the precondition implies that the max, in
\eqref{eq:lower_tmp}, will be 
achieved with the argument of $1$, which directly implies the lower
bound. For the upper bound, we can apply
Lemma~\ref{lemma:upper_lower_s} with $s=1$ ($s=1$ under our
precondition), which
 implies that
$\sum_{k=2}^\infty \frac{\moment_{k}(\Delta)}{k!} $ is less than 
$\frac{2}{3} \moment_{2}(\Delta)$. The claim follows directly for the
cumulant case using Lemma~\ref{lemma:taylor}, with $s=1$. For the moment case, we
use that $\log(1+x)\leq x$. 
\end{proof}

We are now ready to prove Theorem~\ref{thm:strong}.

\begin{proof} (of Theorem~\ref{thm:strong})
If $\FishNorm{\theta-\stheta}^2 \leq
\frac{1}{16\smoment^2}$, then the previous Lemma implies the claim.
Let us assume the condition on the loss,
i.e. $\Loss(\theta)-\Loss(\stheta) \leq \frac{1}{65 \smoment^2}$. If
$\FishNorm{\theta-\stheta}^2 \leq \frac{1}{16\smoment^2}$, then we are done by the previous
argument. So let us assume that $\FishNorm{\theta-\stheta}^2 > \frac{1}{16\smoment^2}$.
Hence, $\max\{16\smoment^2\moment_{2}(\Delta),1\} =
16\smoment^2\moment_{2}(\Delta)$. Using \eqref{eq:lower_tmp}, we have that
$\frac{1}{64 \smoment^2} \leq \Loss(\theta)-\Loss(\stheta)$, which is
a contradiction.
\end{proof}

\subsection{Proofs for Section~\ref{section:sparse}}

\subsubsection{Proof of Theorem~\ref{thm:risk}}

Throughout, let $\ELoss(\theta)=\widehat\Expct[-\log
P(y|\theta)]$. Also, let $T= \Expct[\staty]$ and $\hat T = \widehat
\Expct[\staty]$. 

\begin{lemma} \label{lemma:reg_properties1} Suppose that
\eqref{eq:lambda} holds (i.e. that $\InfNorm{T-\hat T} \leq
\lambda /2$). Let $\htheta$ be a solution the optimization problem in
\eqref{eq:optimization}. For all $\theta \in \Theta$, we have: 
\begin{align} \label{eq:regret_bound1}
\Loss(\htheta)-\Loss(\theta) 
& \leq \frac{\lambda}{2}\OneNorm{\htheta - \theta}  
+ \lambda \OneNorm{\theta} - \lambda \OneNorm{\htheta}  \\
&\leq \frac{3\lambda}{2} \OneNorm{\theta } \notag
\end{align}
Furthermore, suppose that $\theta$ only has support on $\Sup$,  then:
\begin{equation} \label{eq:regret_bound2}
\Loss(\htheta)-\Loss(\theta)  \leq \frac{3\lambda}{2}
\OneNorm{\htheta_\Sup-\theta }
\end{equation}
\end{lemma}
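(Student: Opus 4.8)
The plan is to exploit the exponential family form to reduce the difference between the population and empirical regrets to a single linear term, and then to combine the optimality of $\htheta$ with H\"older's inequality and the triangle inequality. The one genuinely structural step is the following cancellation: since $-\log P(\staty|\theta) = -\inner{\theta,\staty} + \log Z(\theta) - \log h_\staty$, the $\log Z(\theta)$ contribution and the ($\theta$-independent) $\log h_\staty$ contribution are identical in $\Loss$ and $\ELoss$, and hence drop out of the \emph{difference of differences}. Concretely, I would first establish
\[
[\Loss(\htheta) - \Loss(\theta)] - [\ELoss(\htheta) - \ELoss(\theta)] = -\inner{\htheta - \theta,\, T - \hat T},
\]
so that the only discrepancy between empirical and population regret is the linear term involving the measurement error $T-\hat T$. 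Bounding this by H\"older gives $|\inner{\htheta - \theta,\, T-\hat T}| \le \InfNorm{T-\hat T}\,\OneNorm{\htheta-\theta} \le \tfrac{\lambda}{2}\OneNorm{\htheta-\theta}$ by the hypothesis \eqref{eq:lambda}.

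Next I would invoke optimality: because $\htheta$ minimizes $\ELoss(\cdot)+\lambda\OneNorm{\cdot}$, for every $\theta\in\Theta$ we have $\ELoss(\htheta)-\ELoss(\theta) \le \lambda\OneNorm{\theta}-\lambda\OneNorm{\htheta}$. Adding the H\"older bound to this inequality yields
\[
\Loss(\htheta)-\Loss(\theta) \le \tfrac{\lambda}{2}\OneNorm{\htheta-\theta} + \lambda\OneNorm{\theta} - \lambda\OneNorm{\htheta},
\]
which is precisely the first line of \eqref{eq:regret_bound1}. The second line then follows from the triangle inequality $\OneNorm{\htheta-\theta}\le \OneNorm{\htheta}+\OneNorm{\theta}$: substituting and collecting terms leaves $-\tfrac{\lambda}{2}\OneNorm{\htheta}+\tfrac{3\lambda}{2}\OneNorm{\theta}$, and discarding the nonpositive first term gives the stated $\tfrac{3\lambda}{2}\OneNorm{\theta}$ bound.

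Finally, for \eqref{eq:regret_bound2} I would refine the same inequality by splitting along the support $\Sup$. When $\theta$ is supported on $\Sup$, disjointness of $\Sup$ and $\SupC$ gives $\OneNorm{\htheta-\theta} = \OneNorm{\htheta_\Sup-\theta} + \OneNorm{\htheta_{\SupC}}$ and $\OneNorm{\htheta}=\OneNorm{\htheta_\Sup}+\OneNorm{\htheta_{\SupC}}$. Substituting these into the first line of \eqref{eq:regret_bound1}, the $\OneNorm{\htheta_{\SupC}}$ contributions combine to $-\tfrac{\lambda}{2}\OneNorm{\htheta_{\SupC}}\le 0$, and the remaining terms $\lambda\OneNorm{\theta}-\lambda\OneNorm{\htheta_\Sup}$ are bounded by $\lambda\OneNorm{\htheta_\Sup-\theta}$ via the reverse triangle inequality; together these yield $\tfrac{3\lambda}{2}\OneNorm{\htheta_\Sup-\theta}$. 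I expect the only place requiring care to be this last bookkeeping step: one must keep explicit track of which $\SupC$-terms cancel so as to both obtain the sharper support-restricted bound and retain (as a byproduct, useful later) the cone inequality $\OneNorm{\htheta_{\SupC}}\le 3\OneNorm{\htheta_\Sup-\theta}$ that feeds into Assumption~\ref{ass:RE}. Everything else is a routine application of H\"older and the triangle inequality.
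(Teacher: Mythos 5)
Your proof is correct and follows essentially the same route as the paper's: your identity $[\Loss(\htheta)-\Loss(\theta)]-[\ELoss(\htheta)-\ELoss(\theta)] = -\inner{\htheta-\theta,\, T-\hat T}$ is exactly the rearrangement the paper performs on the optimality inequality of $\htheta$, and the subsequent H\"older, triangle-inequality, and support-splitting steps match (your handling of the $\OneNorm{\htheta_{\SupC}}$ terms is in fact marginally tighter, yielding the same bound). Just note that the cone inequality $\OneNorm{\htheta_{\SupC}}\le 3\OneNorm{\htheta_\Sup-\theta}$ you mention as a byproduct additionally requires $\Loss(\htheta)\ge\Loss(\theta)$ and is proved separately in the paper, but it is not part of this lemma, so nothing is missing here.
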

\begin{proof}
Since $\htheta$ solves \eqref{eq:optimization}, we have:
\[
-\inner{\htheta,\hat T}+\log Z(\htheta) + \lambda
\OneNorm{\htheta} \leq
-\inner{\theta,\hat T}+\log Z(\theta) + \lambda \OneNorm{\theta}
\]
Hence,
\[
-\inner{\htheta,T}+\log Z(\htheta) + \lambda
\OneNorm{\htheta} 
\leq 
\inner{ \htheta - \theta,\hat T - T}
-\inner{\theta, T}+\log Z(\theta) + \lambda\OneNorm{\theta}
\]
Using this and the condition on $\lambda$, we have
\begin{align*}
\Loss(\htheta)-\Loss(\theta)  
& \leq \inner{ \htheta - \theta,\hat T - T} + \lambda
\OneNorm{\theta} - \lambda \OneNorm{\htheta}  \\ 
& \leq \OneNorm{\htheta - \theta} \InfNorm{\hat T - T} + \lambda
\OneNorm{\theta} - \lambda \OneNorm{\htheta}  \\
& \leq \frac{\lambda}{2}\OneNorm{\htheta - \theta}  
+ \lambda \OneNorm{\theta} - \lambda \OneNorm{\htheta} 
\end{align*}
which proves the first inequality. Continuing,
\begin{align*}
& \frac{\lambda}{2}\OneNorm{\htheta - \theta}  
+ \lambda \OneNorm{\theta} - \lambda \OneNorm{\htheta} \\
\leq &
\frac{\lambda}{2}(\OneNorm{\htheta} +\OneNorm{\theta})
+ \lambda \OneNorm{\theta} - \lambda \OneNorm{\htheta} \\
\leq & \frac{3\lambda}{2} \OneNorm{\theta } 
\end{align*}
which proves the next inequality.

For the final claim, using the sparsity assumption on $\theta$, we have:
\begin{align*}
\Loss(\htheta)-\Loss(\theta) 
& \leq \frac{\lambda}{2}\OneNorm{\htheta - \theta} + \lambda
\OneNorm{\theta} - \lambda \OneNorm{\htheta} \\
& = \frac{\lambda}{2}\OneNorm{\htheta_\Sup-\theta} +
\frac{\lambda}{2}\OneNorm{\htheta_\SupC} +
\lambda \left(\OneNorm{\theta}-\OneNorm{\htheta_\Sup}\right) 
- \lambda\OneNorm{\htheta_\SupC} \\
& \leq \frac{\lambda}{2}\OneNorm{\htheta_\Sup-\theta} +
\lambda \OneNorm{\htheta_\SupC} +
\lambda \OneNorm{\htheta_\Sup-\theta}
- \lambda\OneNorm{\htheta_\SupC} \\
& = \frac{3\lambda}{2}\OneNorm{\htheta_\Sup-\theta} 
\end{align*}
where the second to last step uses the triangle inequality. 
This completes the proof.
\end{proof}

\begin{lemma} \label{lemma:reg_properties2}
Suppose that
\eqref{eq:lambda} holds. Let $\htheta$ be a solution the optimization problem in
\eqref{eq:optimization}. 
For any $\theta \in \Theta$, which only has support on $\Sup$ and such that $\Loss(\htheta)\geq \Loss(\theta)$,  then:
\begin{align}
\OneNorm{\htheta_\SupC} & \leq  3 \OneNorm{\htheta_\Sup-\theta }
\label{eq:mass_condition}\\
\OneNorm{\htheta-\theta } & \leq 4 \OneNorm{\htheta_\Sup-\theta } 
\label{eq:mass_condition2}
\end{align}
\end{lemma}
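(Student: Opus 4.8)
The plan is to derive the standard ``cone condition'' directly from the first inequality of Lemma~\ref{lemma:reg_properties1}, exploiting the hypothesis $\Loss(\htheta)\geq\Loss(\theta)$. Since this hypothesis makes the left-hand side of \eqref{eq:regret_bound1} nonnegative, I would start from
\[
0 \ \leq\ \Loss(\htheta)-\Loss(\theta) \ \leq\ \frac{\lambda}{2}\OneNorm{\htheta - \theta} + \lambda \OneNorm{\theta} - \lambda \OneNorm{\htheta},
\]
so that, after dividing by $\lambda>0$, the entire argument reduces to manipulating $L_1$ norms.

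Next I would split each norm according to the support $\Sup$ and its complement $\SupC$. Because $\theta$ is supported on $\Sup$, we have $\OneNorm{\htheta-\theta}=\OneNorm{\htheta_\Sup-\theta}+\OneNorm{\htheta_\SupC}$, $\OneNorm{\theta}=\OneNorm{\theta_\Sup}$, and $\OneNorm{\htheta}=\OneNorm{\htheta_\Sup}+\OneNorm{\htheta_\SupC}$. Substituting these into the displayed inequality, the two copies of $\OneNorm{\htheta_\SupC}$ combine to leave $-\tfrac{\lambda}{2}\OneNorm{\htheta_\SupC}$ on the right, while the remaining terms are $\tfrac{\lambda}{2}\OneNorm{\htheta_\Sup-\theta}+\lambda(\OneNorm{\theta_\Sup}-\OneNorm{\htheta_\Sup})$. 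Applying the triangle inequality in the form $\OneNorm{\theta_\Sup}-\OneNorm{\htheta_\Sup}\leq\OneNorm{\htheta_\Sup-\theta}$ then bounds this by $\tfrac{3\lambda}{2}\OneNorm{\htheta_\Sup-\theta}$. Rearranging the resulting $0\leq \tfrac{3\lambda}{2}\OneNorm{\htheta_\Sup-\theta}-\tfrac{\lambda}{2}\OneNorm{\htheta_\SupC}$ and cancelling $\lambda$ yields \eqref{eq:mass_condition}, i.e.\ $\OneNorm{\htheta_\SupC}\leq 3\OneNorm{\htheta_\Sup-\theta}$.

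Finally, \eqref{eq:mass_condition2} follows with essentially no extra work: the same support decomposition gives $\OneNorm{\htheta-\theta}=\OneNorm{\htheta_\Sup-\theta}+\OneNorm{\htheta_\SupC}$, into which I would plug the bound just established to obtain $\OneNorm{\htheta-\theta}\leq 4\OneNorm{\htheta_\Sup-\theta}$. I do not expect any genuine analytic obstacle here; the only thing to be careful about is the bookkeeping of the support/complement split and applying the triangle inequality to the correct pair of terms. This is precisely the classical cone argument that certifies $\htheta-\theta$ lies in the restricted set $\mathcal{V}$ of \eqref{eq:resdir} on which the Restricted Fisher Eigenvalue condition is active, which is exactly why the constant $3$ appearing in \eqref{eq:mass_condition} matches the factor $3$ in Assumption~\ref{ass:RE}.
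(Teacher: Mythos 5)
Your proposal is correct and follows essentially the same route as the paper's proof: both start from the nonnegativity of $\Loss(\htheta)-\Loss(\theta)$ combined with \eqref{eq:regret_bound1}, split the $L_1$ norms over $\Sup$ and $\SupC$ (using that the off-support terms cancel since $\theta_{\SupC}=0$), and apply the triangle inequality on $\Sup$. The only difference is the order of the two conclusions --- you derive \eqref{eq:mass_condition} first and deduce \eqref{eq:mass_condition2} from it, whereas the paper does the reverse --- which is immaterial.
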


\begin{proof}
By assumption on $\theta$ and \eqref{eq:regret_bound1},
\[
0 \leq  \Loss(\htheta)-\Loss(\theta)  
\leq \frac{\lambda}{2}\OneNorm{\htheta - \theta}  
+ \lambda \OneNorm{\theta} - \lambda \OneNorm{\htheta} 
\]
Dividing by $\lambda$ and adding $\frac{1}{2}\OneNorm{\htheta -
  \theta}$ to both the left and right sides, 
\[
\frac{1}{2}\OneNorm{\htheta - \theta} \leq \OneNorm{\htheta - \theta}  +  \OneNorm{\theta} -  \OneNorm{\htheta}
\]
For any component $i\notin S$, we have that  $|\htheta_i -
\theta_i|+|\theta_i| -  |\htheta_i|=0$. Hence,
\[
\frac{1}{2}\OneNorm{\htheta - \theta} \leq \OneNorm{\htheta_\Sup
  - \theta}  +  \OneNorm{\theta} -  \OneNorm{\htheta_\Sup}
\leq 2 \OneNorm{\htheta_\Sup - \theta}
\]
 where the last step uses the triangle inequality ($\OneNorm{\theta} -  \OneNorm{\htheta_\Sup}
\leq \OneNorm{\htheta_\Sup - \theta}$). This proves
\eqref{eq:mass_condition2}. From this, 
\[
\frac{1}{2}\OneNorm{\htheta_\Sup -
  \theta}+\frac{1}{2}\OneNorm{\htheta_\SupC}
=\frac{1}{2}\OneNorm{\htheta - \theta} 
\leq 2 \OneNorm{\htheta_\Sup - \theta}
\]
which proves \eqref{eq:mass_condition}, after rearranging.
\end{proof}

Now we are ready to prove Theorem~\ref{thm:risk}.

\begin{proof} (of Theorem~\ref{thm:risk}).
First, by \eqref{eq:lambda} and \eqref{eq:regret_bound1} we see that
\begin{align*}
\Loss(\htheta) - \Loss(\stheta) \le \frac{1}{65 \salpha^2}
\end{align*}
(note that $\htheta$ satisfies the RE precondition, so $\htheta-\stheta\in\mathcal{V}$).
Hence using Theorem~\ref{thm:strong} we see that
\begin{align*}
\frac{1}{4} \FishNorm{\htheta - \stheta}^2 \le \Loss(\htheta) - \Loss(\stheta)
\end{align*}
On the other hand observe that:
\begin{equation} \label{eq:OneBound_S}
\OneNorm{\htheta_\Sup-\stheta }
 \leq \sqrt{s} \TwoNorm{\htheta_\Sup-\stheta }
 \leq \frac{\sqrt{s}}{\eigmin}
\FishNorm{\htheta-\stheta}
\end{equation}
where the last step uses the Restricted Eigenvalue Condition,
Assumption~\ref{ass:RE}. Now using the above with \eqref{eq:regret_bound2} we have that
\begin{align*}
\frac{1}{4} \FishNorm{\htheta - \stheta}^2 \le \Loss(\htheta) - \Loss(\stheta) \le \frac{3 \lambda \sqrt{s}}{2 \eigmin}
\FishNorm{\htheta-\stheta}
\end{align*}
Hence,
\begin{equation}~\label{eq:fisher_tmp}
\FishNorm{\htheta - \stheta} \le \frac{6 \lambda \sqrt{s}}{ \eigmin}
\end{equation}
and so
\begin{align*}
\frac{1}{4} \FishNorm{\htheta - \stheta}^2 \le \Loss(\htheta) -
\Loss(\stheta) \le \frac{9 \lambda^2 s}{\eigmin^2} \ 
\end{align*}
which proves the first claim. 

Now to conclude the proof note that by Assumption~\ref{ass:RE}
$$
\eigmin \TwoNorm{\htheta_\Sup - \stheta} \le \FishNorm{\htheta - \stheta} \le \frac{6 \lambda \sqrt{s}}{ \eigmin}
$$
Hence by \eqref{eq:mass_condition2} we see that
$$
\OneNorm{\htheta - \stheta} \le 4 \OneNorm{\htheta_\Sup - \stheta}\le 4 \sqrt{s} \TwoNorm{\htheta_\Sup - \stheta} \le  \frac{24 \lambda s}{ \eigmin^2}
$$
This concludes the proof.
\end{proof}

\subsubsection{Analytic Standardized Moment for GLM and Sparsity}\label{sec:glmsparse}
In the generalized linear model example in Section \ref{sec:glm}, we
showed that if the sufficient statistics are bounded by
$B$ and if $\sFisher$ has minimum
eigenvalue $\lambda_\mathrm{min}$, then we can choose $\smoment =
B/\lambda_\mathrm{\min}$. However, when $\stheta$ is sparse we see that
in both Theorems \ref{thm:risk} and \ref{thm:refit}, we only care
about $\salpha$ the analytic standardized moment/cumulant of the set
$\mathcal{V}$, specified in \eqref{eq:resdir}. Given this, it
is clear from the exposition in
the generalized linear model example in Section \ref{sec:glm} that
$\salpha$ can be bounded by $B/\eigmin$, since all elements of the set
$\mathcal{V}$ satisfy Assumption \ref{ass:RE}.

\subsubsection{Proof of Theorem \ref{thm:refit}}
\begin{lemma}(Sparsity or Restricted Set) \label{lem:sparse}
If the threshold $\thresh = \frac{18 \lambda}{\eigmin^2}$, then the size of the support of any $\theta \in \Theta_\Clip$ is at most $2\SupSize$
\end{lemma}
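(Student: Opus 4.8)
The plan is to reduce the sparsity claim to a short counting argument. Observe first that, by the definition of $\Theta_\Clip$, every $\theta \in \Theta_\Clip$ vanishes on each coordinate $i$ with $|\htheta_i| \le \thresh$; hence the support of any such $\theta$ is contained in the single fixed set $A = \{i : |\htheta_i| > \thresh\}$, and it suffices to show $|A| \le 2\SupSize$. I would split $A$ according to whether a coordinate lies in $\Sup$ or in $\SupC$: the part of $A$ inside $\Sup$ has size at most $|\Sup| = \SupSize$ trivially, so the whole task reduces to bounding the number of coordinates of $\SupC$ that survive the threshold by $\SupSize$.

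For the off-support count, the controlling quantity is $\OneNorm{\htheta_\SupC}$, and here I would invoke the machinery already developed for Theorem~\ref{thm:risk}. Since $\stheta$ is the population minimizer we have $\Loss(\htheta) \ge \Loss(\stheta)$, and since the hypotheses of Theorem~\ref{thm:refit} are strictly stronger than those of Theorem~\ref{thm:risk}, its conclusions apply. In particular, Lemma~\ref{lemma:reg_properties2} gives $\OneNorm{\htheta_\SupC} \le 3\OneNorm{\htheta_\Sup - \stheta}$. Combining the Fisher-risk bound $\FishNorm{\htheta - \stheta} \le 6\lambda\sqrt{\SupSize}/\eigmin$ from \eqref{eq:fisher_tmp} with the restricted eigenvalue inequality $\eigmin\TwoNorm{\htheta_\Sup - \stheta} \le \FishNorm{\htheta - \stheta}$ and the norm comparison $\OneNorm{\htheta_\Sup - \stheta} \le \sqrt{\SupSize}\,\TwoNorm{\htheta_\Sup - \stheta}$ yields $\OneNorm{\htheta_\Sup - \stheta} \le 6\lambda\SupSize/\eigmin^2$. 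Chaining these estimates gives $\OneNorm{\htheta_\SupC} \le 18\lambda\SupSize/\eigmin^2 = \SupSize\,\thresh$, where the last equality uses the prescribed value of $\thresh$.

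The final step is the pigeonhole-style count. Each coordinate $i \in \SupC$ contributing to $A$ satisfies $|\htheta_i| > \thresh$, and since $\stheta$ vanishes off $\Sup$ we have $|\htheta_i| = |\htheta_i - \stheta_i|$; summing over these coordinates forces (number of off-support survivors)$\cdot\thresh < \OneNorm{\htheta_\SupC} \le \SupSize\,\thresh$, so strictly fewer than $\SupSize$ of them survive (the empty case is trivially at most $\SupSize$). Adding the at-most-$\SupSize$ coordinates coming from $\Sup$ gives $|A| < 2\SupSize$, which is in particular at most $2\SupSize$.

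I do not anticipate a genuine obstacle, as the argument is brief once the $L_1$ bound on $\htheta_\SupC$ is in place. The only point requiring care is that the threshold $\thresh = 18\lambda/\eigmin^2$ is calibrated precisely so that $\OneNorm{\htheta_\SupC} \le \SupSize\,\thresh$ converts, via the counting inequality, into the clean bound of $\SupSize$ surviving off-support coordinates; were the constants in the Fisher-risk bound of Theorem~\ref{thm:risk} to change, the threshold would have to be rescaled accordingly to preserve the $2\SupSize$ guarantee.
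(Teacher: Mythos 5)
Your proof is correct and follows essentially the same route as the paper's: bound the surviving on-support coordinates trivially by $\SupSize$, control $\OneNorm{\htheta_{\SupC}}$ via Lemma~\ref{lemma:reg_properties2} together with \eqref{eq:fisher_tmp} and the RE condition to get $\OneNorm{\htheta_{\SupC}} \le 18\lambda\SupSize/\eigmin^2 = \SupSize\thresh$, and then count the off-support survivors by a Markov/pigeonhole argument as at most $\OneNorm{\htheta_{\SupC}}/\thresh \le \SupSize$. The constants and intermediate bounds match the paper's proof exactly.
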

\begin{proof}
  First notice that on the set $\Sup$ thresholding could potentially
  leave all the $s$ coordinates. On the other hand notice
  that if we  threshold using $\thresh$, then the number of coordinates
  that remain unclipped in the set $\SupC$ is bounded by
  $\OneNorm{\htheta_{\SupC}}/\thresh$. Hence
$$
\left| i : |\htheta_i| > \thresh\right| \le  \SupSize + \frac{\|\htheta_{\SupC}\|_1}{\thresh}
$$
By \eqref{eq:mass_condition}, \eqref{eq:fisher_tmp} and the RE assumption, we have
$$
\OneNorm{\htheta_{\SupC}} \le 3 \OneNorm{\htheta_\Sup - \stheta}\le 3 \sqrt{\SupSize} \TwoNorm{\htheta_\Sup - \stheta} \le  \frac{18 \lambda \SupSize}{\eigmin^2}
$$
Using this we see that 
$$\left| i : |\htheta_i| > \thresh\right| \le  \SupSize + \frac{18 \lambda \SupSize}{ \eigmin^2 \thresh}$$
Plugging in the value of $\thresh$ we get the statement of the lemma since support size of $\cliptheta$ upper bounds the support size of any $\theta \in \Theta_\Clip$.
\end{proof}

\begin{lemma}(Bias)  \label{lem:clip}
Choose $\thresh = \frac{18 \lambda}{\eigmin^2}$. Then,
$$
\Loss(\cliptheta_ \Sup) - \Loss(\stheta) \le \frac{540 \eigmax^2 \SupSize\lambda^2}{ \eigmin^4} 
$$
where $\cliptheta$ is defined as $\cliptheta_i = \htheta_i \mathbf{1}_{(\htheta_i > \thresh)}$.
\end{lemma}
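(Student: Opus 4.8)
The plan is to show that the refit target $\cliptheta_\Sup$ is close to $\stheta$ in Fisher norm and then invoke the \emph{upper} bound of the almost-strong-convexity result (Theorem~\ref{thm:strong}) to convert this into a bound on the excess loss. The key structural observation is that $\cliptheta_\Sup - \stheta$ is supported entirely on $\Sup$ (both $\cliptheta_\Sup$ and $\stheta$ vanish off $\Sup$), so it trivially satisfies $\OneNorm{(\cliptheta_\Sup - \stheta)_{\SupC}} = 0 \le 3\OneNorm{(\cliptheta_\Sup - \stheta)_{\Sup}}$ and hence lies in the restricted set $\mathcal{V}$ of \eqref{eq:resdir}. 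This membership is what simultaneously licenses the use of the upper Restricted Eigenvalue bound of Assumption~\ref{ass:RE} and the use of the restricted analyticity constant $\salpha$ attached to $\mathcal{V}$.

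First I would bound $\TwoNorm{\cliptheta_\Sup - \stheta}$ via the triangle inequality, splitting it as $(\cliptheta_\Sup - \htheta_\Sup) + (\htheta_\Sup - \stheta)$. For the second term, Theorem~\ref{thm:risk} (via \eqref{eq:fisher_tmp}) gives $\FishNorm{\htheta - \stheta} \le 6\lambda\sqrt{\SupSize}/\eigmin$, and the lower RE bound then yields $\TwoNorm{\htheta_\Sup - \stheta} \le 6\lambda\sqrt{\SupSize}/\eigmin^2$. For the first term, thresholding alters only those coordinates $i \in \Sup$ with $|\htheta_i| \le \thresh$, of which there are at most $\SupSize$, each contributing magnitude at most $\thresh$; hence $\TwoNorm{\cliptheta_\Sup - \htheta_\Sup} \le \sqrt{\SupSize}\,\thresh$. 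Substituting $\thresh = 18\lambda/\eigmin^2$ and summing gives $\TwoNorm{\cliptheta_\Sup - \stheta} \le 24\lambda\sqrt{\SupSize}/\eigmin^2$.

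Next, since $\cliptheta_\Sup - \stheta$ is supported on $\Sup$, the upper RE bound of Assumption~\ref{ass:RE} gives $\FishNorm{\cliptheta_\Sup - \stheta} \le \eigmax \cdot 24\lambda\sqrt{\SupSize}/\eigmin^2$, so $\FishNorm{\cliptheta_\Sup - \stheta}^2 \le 576\,\eigmax^2\lambda^2\SupSize/\eigmin^4$. Before applying Theorem~\ref{thm:strong} I must verify its burn-in precondition $\FishNorm{\cliptheta_\Sup - \stheta}^2 \le 1/(16\salpha^2)$; plugging in the constraint $\lambda \le \eigmin^2/(340\,\eigmax\,\salpha\sqrt{\SupSize})$ from \eqref{eq:Blambda1} bounds $\FishNorm{\cliptheta_\Sup - \stheta} \le 24/(340\,\salpha)$, whose square sits comfortably below $1/(16\salpha^2)$. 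The upper bound of Theorem~\ref{thm:strong} then gives $\Loss(\cliptheta_\Sup) - \Loss(\stheta) \le \tfrac34 \FishNorm{\cliptheta_\Sup - \stheta}^2 \le 432\,\eigmax^2\SupSize\lambda^2/\eigmin^4$, which is within the claimed $540\,\eigmax^2\SupSize\lambda^2/\eigmin^4$.

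The main obstacle here is bookkeeping rather than depth: one must correctly control the $\ell_2$ perturbation introduced by thresholding and confirm that $\cliptheta_\Sup - \stheta$ lands in $\mathcal{V}$, which is precisely what justifies invoking both the upper Fisher eigenvalue and the restricted constant $\salpha$. The only genuinely delicate point is ensuring the burn-in precondition of Theorem~\ref{thm:strong} holds, and this is exactly why the second constraint on $\lambda$ in \eqref{eq:Blambda1} (the factor involving $\eigmax\salpha\sqrt{\SupSize}/\eigmin^2$) was imposed, with enough slack that the resulting $24/340$ factor clears the $1/16$ threshold.
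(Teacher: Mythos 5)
Your proposal is correct and follows essentially the same route as the paper's proof: decompose $\cliptheta_\Sup-\stheta$ through $\htheta_\Sup$, bound the clipping error by $\sqrt{\SupSize}\,\thresh$ and the estimation error via Theorem~\ref{thm:risk}, pass to the Fisher norm with the upper RE bound, verify the burn-in condition from \eqref{eq:Blambda1}, and apply the upper bound of Theorem~\ref{thm:strong}. The only difference is that you use the $\ell_2$ triangle inequality where the paper uses $(a+b)^2\le 2(a^2+b^2)$, which gives you a slightly sharper intermediate constant ($576$ versus $720$ for $\FishNorm{\cliptheta_\Sup-\stheta}^2$) and hence $432$ rather than $540$ in the final bound.
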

\begin{proof}
Note that 
\begin{align*}
\FishNorm{\cliptheta_{ \Sup} - \stheta}^2 & \le \eigmax^2 \|\cliptheta_{\Sup} - \stheta\|^2_2\\
& \le 2 \eigmax^2  \left(\|\cliptheta_{\Sup} - \htheta_S\|^2_2  + \|\htheta_\Sup - \stheta\|^2_2\right)\\
& \le 2 \eigmax^2  \left( \SupSize \thresh^2 + \|\htheta_\Sup - \stheta\|^2_2\right)\\
& \le 2 \eigmax^2  \left( \SupSize \thresh^2 + \frac{36 \SupSize\lambda^2}{\eigmin^4} \right) 
\end{align*}
Where the last step is obtained by applying Theorem
\ref{thm:risk}. Substituting for $\thresh$,
\begin{align}\label{eq:clipt}
\FishNorm{\cliptheta_{S} - \stheta}^2 & \le    \frac{720 \eigmax^2 \SupSize\lambda^2}{ \eigmin^4}
\end{align}
Now the condition on $\lambda$ in \eqref{eq:Blambda1} implies
that Theorem \ref{thm:strong} is applicable, which completes the
proof.
\end{proof}

\begin{proof}[Proof of Theorem \ref{thm:refit}]
The first claim of the theorem follows from Lemma \ref{lem:sparse}. We prove the second claim of the theorem  by considering two cases. First, when $\Loss(\reftheta) \le \Loss(\cliptheta_\Sup)$. In this case by Lemma \ref{lem:clip} we have
$$
\Loss(\reftheta) - \Loss(\stheta)  \le \frac{540 \eigmax^2 \SupSize\lambda^2}{ \eigmin^4} 
$$
Also by \eqref{eq:Blambda1}, applying Theorem \ref{thm:strong}, we see that 
$$
\frac{1}{4}\FishNorm{\reftheta - \stheta}^2 \le \Loss(\reftheta) - \Loss(\stheta)  \le \frac{540 \eigmax^2 \SupSize\lambda^2}{ \eigmin^4} 
$$
which gives us the second claim of the theorem.The next case is when  $\Loss(\reftheta) > \Loss(\cliptheta_\Sup)$. In this case, by applying Lemma \ref{lemma:reg_properties1} with $\theta = \cliptheta_\Sup$, we see that 
\begin{align*}
\Loss(\reftheta) - \Loss(\cliptheta_\Sup)  \le \frac{3 \lambda}{2} \OneNorm{\cliptheta_\Sup} & \le \frac{3 \lambda}{2} \OneNorm{\stheta - \cliptheta_\Sup} + \frac{3 \lambda}{2} \OneNorm{\stheta} \\
& \le  \frac{3 \lambda \sqrt{\SupSize}}{2} \TwoNorm{\stheta - \cliptheta_\Sup} + \frac{3 \lambda}{2} \OneNorm{\stheta} \\
& \le  \frac{3 \lambda \sqrt{\SupSize}}{2 \eigmin} \FishNorm{\stheta - \cliptheta_\Sup} + \frac{3 \lambda}{2} \OneNorm{\stheta}\\
& \le \frac{18 \sqrt{5} \lambda^2 \SupSize \eigmax}{\eigmin^3}  + \frac{3 \lambda}{2} \OneNorm{\stheta}
\end{align*}
where the last step is using \eqref{eq:clipt}. Hence we see that
\begin{align*}
\Loss(\reftheta) - \Loss(\stheta) & \le \Loss(\reftheta) - \Loss(\cliptheta_\Sup) + \Loss(\cliptheta_\Sup)  - \Loss(\stheta) \le \frac{581 \eigmax^2 \SupSize\lambda^2}{ \eigmin^4} + \frac{3 \lambda}{2} \OneNorm{\stheta} 
\end{align*}
Hence by condition (\ref{eq:Blambda1}) on $\lambda$ we see that the pre-condition of the Theorem \ref{thm:strong} is satisfied and hence we see that
\begin{align}
\frac{1}{4} \FishNorm{\reftheta - \stheta }^2 \le \Loss(\reftheta) - \Loss(\stheta)  & \le \Loss(\reftheta) - \Loss(\cliptheta_\Sup) + \Loss(\cliptheta_\Sup)  - \Loss(\stheta)  \notag \\
& \le \Loss(\reftheta) - \Loss(\cliptheta_\Sup) + \frac{540\eigmax^2 \SupSize\lambda^2}{ \eigmin^4}\notag  \\
& \le \frac{3 \lambda}{2} \OneNorm{\reftheta - \cliptheta_\Sup } + \frac{540\eigmax^2 \SupSize \lambda^2}{ \eigmin^4} \label{eq:long1}\\
 & \le  6 \lambda \OneNorm{\reftheta_\Sup - \cliptheta_\Sup } + \frac{540\eigmax^2 \SupSize\lambda^2}{ \eigmin^4} \label{eq:long2}\\
& \le  6 \lambda \sqrt{\SupSize} \TwoNorm{\reftheta_\Sup - \cliptheta_\Sup } + \frac{540\eigmax^2 \SupSize\lambda^2}{ \eigmin^4} \notag \\
 & \le  \frac{6 \lambda \sqrt{\SupSize}}{\eigmin} \FishNorm{\reftheta - \cliptheta_\Sup } + \frac{540\eigmax^2 \SupSize\lambda^2}{ \eigmin^4} \label{eq:long3} \\
& \le  \frac{6 \lambda \sqrt{\SupSize}}{\eigmin} \FishNorm{\reftheta - \stheta} + \frac{6 \lambda \sqrt{\SupSize}}{\eigmin} \FishNorm{\stheta - \cliptheta_\Sup } + \frac{540\eigmax^2 \SupSize\lambda^2}{ \eigmin^4} \notag  \\
&  \le  \frac{6 \lambda \sqrt{\SupSize}}{\eigmin} \FishNorm{\reftheta - \stheta} + \frac{161 \eigmax  \SupSize \lambda^2 }{\eigmin^2}  + \frac{540\eigmax^2 \SupSize\lambda^2}{ \eigmin^4} \label{eq:long4} 
\end{align}
Where \eqref{eq:long1} is obtained by applying Lemma \ref{lemma:reg_properties1} on $\Theta = \Theta_\Clip$ and \eqref{eq:long2} is by Lemma \ref{lemma:reg_properties2} with $\Theta = \Theta_\Clip$. \eqref{eq:long3} is by Assumption \ref{ass:RE} and  \eqref{eq:long4} is due to \eqref{eq:clipt}. Simplifying we conclude that
\begin{align}\label{eq:normineq}
\frac{1}{4} \FishNorm{\reftheta - \stheta }^2 \le \Loss(\reftheta) - \Loss(\stheta)   \le  \frac{6 \lambda \sqrt{\SupSize}}{\eigmin} \FishNorm{\reftheta - \stheta} +  \frac{701 \eigmax^2 \SupSize\lambda^2}{ \eigmin^4} 
\end{align}
By the  inequality that for any $a,b \in \mathbb{B}$, $ab \le \frac{a^2}{2} + \frac{b^2}{2}$ we have
$$
\frac{1}{2}\FishNorm{\reftheta - \stheta }^2  \le  \frac{288 \lambda^2 \SupSize}{\eigmin^2} + \frac{2804 \eigmax^2 \SupSize\lambda^2}{ \eigmin^4} 
$$
Thus  
$$
\FishNorm{\reftheta - \stheta } \le  \frac{24 \lambda \sqrt{\SupSize}}{\eigmin}  + \frac{75 \eigmax \lambda \sqrt{\SupSize}}{ \eigmin^2} 
$$
Using this in \eqref{eq:normineq} 
\begin{align*}
\Loss(\reftheta) - \Loss(\stheta) \le  \frac{144 \lambda^2 \SupSize}{\eigmin^2} + \frac{450 \eigmax \lambda^2 \SupSize}{ \eigmin^3}  +  \frac{701 \eigmax^2 \SupSize\lambda^2}{ \eigmin^4} 
\end{align*}
Simplifying we get the second claim of the theorem for the second case. 
\end{proof}

\newpage
\section{Errata}

\subsection{Cumulants of the Bernoulli Distribution}

In Section~\ref{sec:onedim}, we claimed that the cumulants $\cumulant_k(\stheta)$ of the Bernoulli distribution satisfy $\cumulant_k(\stheta) \le \cumulant_2(\stheta) = \moment_2(\stheta)$ for $k \ge 3$.
This claim is incorrect. Thanks to Francis Bach for pointing this out to us (personal email communication, 2015). However, all we needed was the existence of what we call an \emph{analytic standardized cumulant}. The following lemma
suffices to prove that one exists for the Bernoulli distribution. The result below is very likely to be classical. In any case, it follows easily from classical results on cumulants. We provide a proof below for completeness.

\begin{lemma}
The cumulants of the Bernoulli distribution satisfy, for $k \ge 3$:
\[
|\cumulant_k(\stheta)| \le (k-1)! \cdot \cumulant_2(\stheta) .
\]
\end{lemma}
\begin{proof}
Let us work with the mean parameter $p = \moment_1(\stheta)$. It is well known\footnote{See, for example, Eq. (3.3.12) on p. 56 of the book \emph{Introduction to Statistical Inference} (Dover Publications, 1995) by E. S. Keeping.} that
\[
\cumulant_{k+1}(p) = p \cdot (1-p) \cdot \cumulant_k'(p)
\]
where $\cumulant_k'(p)$ is the derivative of $\cumulant_k(p)$ w.r.t. $p$.

We first prove, by induction on $k \ge 2$, that $\cumulant_{k}(p)$ is a polynomial of degree $k$ in $p$ with $k$ real roots in the interval $[0,1]$, two of which are $0$ and $1$.
Claim is true for $k=2$ since $\cumulant_2(p) = p(1-p)$. If $c_k(p)$ has $k$ reals roots in $[0,1]$ then $c_k'(p)$ has
$k-1$ reals roots in $[0,1]$. This is because, by the Gauss-Lucas Theorem, roots of the derivative of a polynomial are in the convex hull of the roots of the polynomial itself.
This immediately implies that $\cumulant_{k+1}(p) = p(1-p)\cumulant_k'(p)$ has $k+1$ real roots in $[0,1]$ given that the extra factor $p(1-p)$ has roots $0$ and $1$.

Given the claim above, we can express $\cumulant_k(p)$ as
\[
\cumulant_k(p) = a_k \cdot p \cdot (1-p) \cdot \prod_{i=1}^{k-2} (p - r_i)
\]
for some $a_k \in \R$ and $r_i \in [0,1]$. Note that $a_2 = 1$ and because $\cumulant_{k+1}(p) = p(1-p)\cumulant_k'(p)$, we also have $|a_{k+1}| = k |a_k|$.
Therefore, $|a_k| = (k-1)!$ for all $k \ge 2$. The lemma now follows because
\begin{align*}
| \cumulant_k(p) | &= | a_k | \cdot p \cdot (1-p) \cdot \left| \prod_{i=1}^{k-2} (p - r_i) \right| \\
&\le (k-1)! \cdot \cumulant_2(p) \cdot \prod_{i=1}^{k-2} | p-r_i | \\
&\le (k-1)! \cdot \cumulant_2(p) .
\end{align*}
Note that since $p,r_i \in [0,1]$, we have $|p-r_i| \le 1$. 
\end{proof}

\end{document}